\newenvironment{proofsketch}[1][Proof Sketch]{\begin{trivlist}
\item[\hskip \labelsep {\bfseries #1}]}{\end{trivlist}}
\newcommand{\commentout}[1]{}
\begin{document}

\title{
Horizon-Independent Optimal Prediction with Log-Loss in Exponential Families}
\coltauthor{\Name{Peter Bartlett} \Email{bartlett@cs.berkeley.edu}\\
\addr University of California at Berkeley, \\
Queensland University of Technology
\AND
\Name{Peter Gr\"{u}nwald} \Email{Peter.Grunwald@cwi.nl}\\
 \addr CWI  P.O. Box 94079
NL-1090
GB
Amsterdam
The Netherlands 
 \AND
 \Name{Peter Harremo\"{e}s} 
\Email{harremoes@ieee.org}\\
 \addr Copenhagen Business College
Denmark
\AND
\Name{Fares Hedayati} \Email{fareshed@eecs.berkeley.edu%
}\newline
\addr University of California at Berkeley
\AND
\Name{Wojciech Kot\l{}owski} \Email{wkotlowski@cs.put.poznan.pl}\\
\addr Pozna{\'n} University of Technology Poland
} \maketitle
\maketitle

\begin{abstract}

We study online learning under logarithmic loss with regular parametric
models. \newline
\cite{HedayatiB12} showed that a Bayesian prediction strategy with Jeffreys
prior and sequential normalized maximum likelihood (SNML) coincide and are
optimal if and only if the latter is exchangeable, and if and only if the
optimal strategy can be calculated without knowing the time horizon in
advance. They put forward the question what families have exchangeable SNML
strategies. This paper fully answers this open problem for one-dimensional
exponential families. The exchangeability can happen only for three classes
of natural exponential family distributions, namely the Gaussian, Gamma, and
the Tweedie exponential family of order $\nicefrac{3}{2}$.
\end{abstract}

\commentout{
\coltauthor{\Name{Peter Gr\"{u}nwald} \Email{%
Peter.Grunwald@cwi.nl}\\

\addr
CWI

P.O.
Box
94079
NL-1090
GB
Amsterdam
The
Netherlands

\AND

\Name{Peter
Harremo\"{e}s} 
\Email{harremoes@ieee.org}\newline

\addr
Copenhagen
Business
College,
Denmark

\AND

\Name{Wojciech
Kot\{l}owski} 
\Email{wkotlowski@cs.put.poznan.pl}\newline

\addr
Institute
of
Computing
Science
Poznan
University
of
Technology

\AND
\Name{Fares
Hedayati} 
\Email{fareshed@eecs.berkeley.edu}\newline

\addr
University
of
California,
Berkeley,
Department of
Computer Science

\AND
\Name{Peter Bartlett} 
\Email{%
bartlett@cs.berkeley.edu }
\newline

\addr
University of California at Berkeley,

\newline

Queensland University of Technology
}


\begin{keywords}
SNML Exchangeability, Exponential Family, Online Learning, Logarithmic Loss, Bayesian Strategy, Jeffreys Prior, Fisher Information 
\end{keywords}

\section{Introduction}

We work in the setting of online learning under logarithmic loss. Let $%
x_{1},x_{2},\ldots $, be a sequence of outcomes from $\mathcal{X}$ revealed
one at a time. We use $x^{t}$ to denote $\left( x_{1},x_{2},\cdots
,x_{t}\right) $, and $x_{m}^{n}$ to denote $\left( x_{m},x_{m+1},\cdots
,x_{n}\right) .$ At round $t$, after observing $x^{t-1}$, a forecaster
assigns a probability distribution on $\mathcal{X}$, denoted $p\left(
\,\cdot \mid x^{t-1}\right) $. Then, after $x_{t}$ is revealed, the
forecaster incurs the \emph{log loss} -$\mathrm{\ln }\;p\left( x_{t}\mid
x^{t-1}\right) .$ The performance of the strategy is measured relative to
the best in a reference set of strategies (experts). The difference between
the accumulated loss of the prediction strategy and the best expert in the
reference set is called the \emph{regret} \citep{CesaBianchiLugosi06}. The
goal is to minimize the regret in the worst case over all possible data
sequences.


In this paper our set of experts are i.i.d. exponential families of
distributions, examples of which include normal, Bernoulli, multinomial,
Gamma, Poisson, Pareto, geometric distributions and many others. If there is
a known time horizon $n$ of the game (sequence length), a well-known result
in the literature states that the minimax regret is achieved by the \emph{\
normalized maximum likelihood} strategy, NML for short %
\citep{Shtarkov87,Rissanen96}. If the parameter space of a $d$-dimensional
exponential family is constrained to a compact subset of the parameter
space, NML achieves regret $\frac{d}{2}\mathrm{\ln }\;n+O(1)$. For
unconstrained parameter spaces, the NML strategy is often not defined
because it relies on finiteness of the Shtarkov sum (or integral) and in
many application this sum is infinite. In these cases NML can be replaced by
the \emph{conditional normalized maximum likelihood strategy} (CNML), which
acts like NML, except that a small initial segment of the sequence is
observed before prediction starts and then the NML strategy is calculated
conditioned on that initial segment. Whereas NML is optimal in the sense of
achieving minimax regret (whenever it is finite), CNML is optimal in the
sense that it achieves minimax \emph{conditional\/} regret. Unfortunately
both CNML and (whenever it is defined) the original NML suffer from two
major drawbacks: the horizon $n$ of the problem needs to be known in
advance, and the strategy can be computationally expensive since it involves
marginalizing over all possible future subsequences up to iteration $n$.
These drawbacks motivated researchers to come up with an approximation to
CNML, known as \emph{sequential normalized maximum likelihood}, or SNML for
short \citep{TakimotoW00ALT,RissanenR07,RoosR08}.

SNML predictions coincide with those of the CNML distribution under the
assumption that the current iteration is the last iteration. Therefore, SNML
can be viewed as an approximation to CNML for which the time horizon of the
game does not need to be known. \citet{KotlowskiG11} showed that for general
exponential families SNML is optimal up to an $O\left( 1\right) $--term.
Interestingly, acting short-sighted and looking only one step ahead does not
hurt much.

A natural question to ask is if there are cases in which looking one step
ahead in the prediction game is \emph{exactly} the best one can do, even if
the time horizon is known? In other words, when do SNML and CNML coincide?
We believe answering this question is of fundamental importance for online
learning at least from the two following reasons. First, we know that in a
general sequential decision process, obtaining the optimal strategy requires
recursive solution of the Bellman equation by a backward induction. A
positive answer to the question above implies that we can avoid the backward
induction altogether, because the optimal strategy becomes time-horizon
independent: we get the same, optimal strategy no matter how far to the
future we look. Thus, we only need to analyze the worst case regret with
respect to the current outcome to be predicted.

Secondly, it has been shown \citep{KotlowskiG11,HedayatiB12AISTATS,HedayatiB12,Harremoes13}
that when CNML and SNML coincide, then they become Bayesian strategies and
the prior of the Bayesian strategy must be Jeffreys prior. In other words,
if CNML is time-horizon independent, then the Bayesian strategy with
Jeffreys prior is the (conditional) minimax strategy. %
\citet{HedayatiB12AISTATS,HedayatiB12} showed that this happens if and only
if SNML strategy is \emph{exchangeable}. Testing the exchangeability of the
sequential strategy is, however, hard, and does not lead to a simple
characterization of exponential families for which CNML=SNML holds.
Therefore, \citet{HedayatiB12} put forward the following question: in the
case of exponential families, what families have exchangeable sequential
normalized maximum likelihood strategies?

In this paper we give a complete answer to the CNML=SNML question, when the
reference set of experts is a single-parameter exponential family. We show
that there are essentially only three exponential families with time-horizon
independent minimax strategy (and hence optimal Bayesian strategy with
Jeffreys prior and optimal SNML). These families are Gamma, Gaussian, and
Tweedie $\nicefrac{3}{2}$ families (but also included are those families,
which can be obtained by a fixed transformation of variable from any of the
three above, e.g. Pareto, Laplace, Rayleigh and many others). This means
that only in these families, a Bayesian strategy with Jeffreys prior is
equivalent to SNML and both are equivalent to CNML and hence optimal in the
minimax conditional regret sense. More interestingly this implies that only
in these three families CNML becomes independent of the horizon, so that
one--stead ahead lookup becomes equivalent to $n$--step ahead lookup, where $%
n$ is the amount of data the player is eventually going to observe.

The paper is organized as follows. We introduce the mathematical context for
our results in Section~\ref{sec:notations}. We then give our main result in
Section~\ref{sec:main_results}, showing that Gamma, Gaussian and Tweedie %
\nicefrac{3}{2} family are the only families with time-horizon independent
minimax strategies. Short versions of the proofs are given in Section~\ref%
{sec:main_results}, and detailed proofs can be found in the appendix.%
We end with a short discussion in Section~\ref{sec:conclusion}.

\section{Set-Up}

\label{sec:notations} We work in the setup of \cite{HedayatiB12} and use
their definitions and notation except that we follow \cite{Grunwald07} in
the distinction between NML and CNML.

A sequential prediction strategy (or just `strategy' for short) $p$ is any
sequential probability assignment that, given a history $x^{t-1}$, defines $%
p\left( \cdot \left\vert x^{t-1}\right. \right) $, the conditional density
of $x_{t}\in \mathcal{X}$ with respect to a fixed underlying measure $%
\lambda $ on $\mathcal{X}$. As an example, we usually take $\lambda $ to be
the counting measure if $\mathcal{X}$ is discrete; and if $\mathcal{X}=%
\mathbb{R}^{d}$, $\lambda $ is taken to be Lebesgue measure.

A prediction strategy defines a joint distribution $p$ on sequences of
elements of $\mathcal{X}$ in the obvious way, 
\begin{equation*}
p\left( x^{n}\right) =\prod_{t=1}^{n}p\left( x_{t}\left\vert x^{t-1}\right.
\right) .
\end{equation*}%
Conversely, any probability distribution $p\left( x^{n}\right) $ on the set $%
\mathcal{X}^{n}$ defines a prediction strategy induced by its conditional
distributions $p\left( \cdot \left\vert x^{t-1}\right. \right) $ for $1\leq
t\leq n$ \citep{CesaBianchiLugosi06,Grunwald07}.

We try to come up with strategies which predict as well as the best element
of a reference set of 'experts', which we take to be $1$-dimensional \emph{%
i.i.d. natural exponential families}. For these families $\mathcal{X}$ can
be identified with a subset of $\mathbb{R}$ and the set of 'experts' is a
set of distributions $\left\{ p_{\theta }\mid \theta \in \Theta \right\} $
on $\mathbb{R}$, each of which is of the form 
\begin{equation}
p_{\theta }(x)=h(x)\mathrm{e}^{\theta x-A(\theta )},~\theta \in \Theta .
\label{exponential family}
\end{equation}%
Here $h$ is a reference measure, given as a density relative to the
underlying measure $\lambda $. Here $A$ is the cumulant generating function
given by $A(\theta )=\mathrm{\ln }\int ${\textrm{$e$}}${^{\theta x}\,\mathrm{%
d}h(x)}$. The so-called natural parameter space of the family is the set 
\begin{equation}
\Theta _{\text{full}}=\left\{ \theta \in \mathbb{R}\left\vert A(\theta
)<\infty \right. \right\}  \label{natural parameter space}
\end{equation}%
We will generally work with potentially restricted families with parameters
sets $\Theta $ that may be proper subsets of $\Theta _{\text{full}}$ \emph{%
and that we always require to have nonempty interior\/} (so for example, we
do not consider finite subfamilies). Families with $\Theta =\Theta _{\text{%
full}}$ are called \emph{full}.

The families are extended to $n$ outcomes by taking product distributions: $%
p_{\theta }\left( x^{n}\right) =\prod_{t=1}^{n}p_{\theta }\left(
x_{t}\right) $. In this way each member of the family defines a prediction
strategy $p_{\theta }$ such that $p_{\theta }\left( x_{t}\left\vert
x^{t-1}\right. \right) :=p_{\theta }\left( x_{t}\right) $ with $p_{\theta
}\left( x_{t}\right) $ given by (\ref{exponential family}). Note that we
never assume that data are i.i.d.; only the set of predictors we compare
ourselves to treat it as i.i.d.

According to the standard general definition of exponential families %
\citep{Barndorff78}, we can have $\theta f(x)$ instead of $\theta x$ in the
exponent of (\ref{exponential family}), for an arbitrary fixed function $f$.
Families with $f(x)=x$ are called \emph{natural\/} exponential families
relative to random vector $X$ (defined as $X(x)=x$).

However, as long as $f$ is smooth and 1-to-1, a general exponential family
with statistic $f(x)$ can always re-expressed as a natural exponential
family relative to a different random variable $Y=f\left( X\right) $ (i.e.
it defines exactly the same distributions on the underlying space), so our
restriction to natural families is actually quite mild; see also the
discussion right after our main result Theorem \ref{three_}.

Given a fixed horizon $n$ and a parameter space $\Theta $, the NML strategy %
\citep{Shtarkov87,Rissanen96} is defined via the joint probability
distribution 
\begin{equation}
p_{nml}^{(n)}\left( x^{n}\right) =\frac{\sup_{\theta \in \Theta
}\,\,p_{\theta }\left( x^{n}\right) }{\int_{\mathcal{X}^{n}}\sup_{\theta \in
\Theta }p_{\theta }\left( y^{n}\right) \,\mathrm{d}\lambda ^{n}\left(
y^{n}\right) },  \label{nml}
\end{equation}%
provided that the so-called \emph{Shtarkov\/ integral} in the denominator
exists. To ensure that the NML-distribution exists we will assume that the
parameter space is closed. For $t\leq n,$ the conditional probability
distribution is 
\begin{equation}
p_{nml}^{(n)}\left( x_{t}\left\vert x^{t-1}\right. \right) =\frac{%
p_{nml}^{(n)}\left( x^{t}\right) }{p_{nml}^{(n)}\left( x^{t-1}\right) }
\end{equation}%
where $p_{nml}^{(n)}\left( x^{t}\right) $ and $p_{nml}^{(n)}\left(
x^{t-1}\right) $ are marginalized joint probability distributions of $%
p_{nml}^{(n)}(x^{n})$: 
\begin{equation*}
p_{nml}^{(n)}\left( x^{t}\right) =\int_{\mathcal{X}^{n-t}}p_{nml}^{(n)}%
\left( x^{n}\right) \,\mathrm{d}\lambda ^{n-t}\left( x_{t+1}^{n}\right) .
\end{equation*}%
Note that the expression for the conditional distribution of a full-length
complement of a sequence $x_{t+1}^{n}$ given the initial part of the
sequence $x^{t}$ then simplifies to: 
\begin{equation}
p_{nml}^{(n)}\left( x_{t+1}^{n}\left\vert x^{t}\right. \right) =\frac{%
p_{nml}^{(n)}\left( x^{n}\right) }{p_{nml}^{(n)}\left( x^{t}\right) }=\frac{%
\sup_{\theta \in \Theta }p_{\theta }\left( x^{n}\right) }{\int_{\mathcal{X}%
^{n-t}}\sup_{\theta \in \Theta }p_{\theta }\left( x^{t}y^{n-t}\right) \,%
\mathrm{d}\lambda ^{n-t}\left( y^{n-t}\right) }.  \label{nmlcnml}
\end{equation}%
In many cases the NML strategy is undefined, due to the normalization factor
(Shtarkov integral) being infinite. In such cases, by conditioning on a
fixed initial sequence of length $m$ the problem usually goes away. The resulting 
\emph{\ conditional NML\/} (CNML) distribution achieves the minimax \emph{\
conditional\/} regret \cite[Chapter 11]{Grunwald07}. CNML is defined via the
conditional probability distribution in the following way 
\begin{equation}
p_{cnml}^{(n)}\left( x_{m+1}^{n}\left\vert x^{m}\right. \right) =\frac{%
\sup_{\theta \in \Theta }p_{\theta }\left( x^{n}\right) }{\int_{\mathcal{X}%
^{n-m}}\sup_{\theta \in \Theta }p_{\theta }\left( x^{m}y^{n-m}\right) \,%
\mathrm{d}\lambda ^{n-m}\left( y^{n-m}\right) }.  \label{cnml}
\end{equation}%
Note that (\ref{cnml}) coincides with (\ref{nmlcnml}), so CNML can be
considered a generalization of NML. NML and CNML are costly due to the
amount of marginalization at each round. Furthermore they are hor{izon\/}%
--dependent, i.e. the predictions to be made depend on the amount of data
that will eventually be seen. \cite{Grunwald07} discusses in detail why this
can be problematic. Two alternative strategies which avoid these issues are
the Bayesian strategies with Jeffreys prior and the sequential normalized
maximum likelihood strategy, SNML for short, as developed by \cite%
{RissanenR07,RoosR08}. SNML is defined via the conditional probability
distribution in the following way 
\begin{equation*}
p_{snml}\left( x_{t}\left\vert x^{t-1}\right. \right) =p_{cnml}^{(t)}\left(
x_{t}\left\vert x^{t-1}\right. \right) .
\end{equation*}%
\cite{KotlowskiG11} showed that SNML provides a reasonably good
approximation of CNML. At each point in time $t-1$, the SNML strategy for
predicting the next outcome $x_{t}$ may be viewed as the strategy that would
lead to minimax optimal conditional regret if the next step was the last
round of the game. Hence, it is essentially a \emph{last-step minimax\/}
strategy in the sense of \cite{TakimotoW00}.

The other alternative, the Bayesian strategy with Jeffreys prior, is also
defined via its conditional distributions as

\begin{equation*}
p_{\pi }\left( x_{t}\left\vert x^{t-1}\right. \right) \,\,=\int_{\theta \in
\Theta }p_{\theta }\left( x_{t}\right) \,\mathrm{d}\pi \left( \theta
\left\vert x^{t-1}\right. \right) .
\end{equation*}%
Here $\pi \left( \theta \left\vert x^{t-1}\right. \right) $ is the posterior
distribution based on prior $\pi (\cdot )$ and $\pi (\cdot )$ is \emph{%
Jeffreys prior} defined to be proportional to $I(\theta )^{\nicefrac{1}{2}}$
with $I$ being the Fisher information. A well-known result in the literature
says that if the parameter space is effectively smaller than the natural
parameter space then the Bayesian strategy with Jeffreys prior is
asymptotically minimax optimal (See chapters~7 and 8 in \cite{Grunwald07}).
The nice thing about these two alternatives is that unlike CNML they are
defined naturally via conditional probability distributions that are much
easier to compute. In general Jeffreys prior cannot be normalized (i.e. $%
\int I(\theta )^{\nicefrac{1}{2}}~\mathrm{d}\theta =\infty $) but for all
models used in applications its posterior after just one single observation
is proper (i.e. well-defined) and can be used for predictions; see below
Lemma~\ref{lem:dmaximal} for details (note though that there exist
pathological models where no finite number of observations will give a
proper Jeffreys posterior \citep{Harremoes13}).

\cite{HedayatiB12} proved that these two alternatives are exactly the same
as CNML and hence optimal if and only if SNML is exchangeable. Let $p$ be
any time horizon-independent sequential prediction strategy conditioned on
an initial sequence $x^{m}$, which for any $n>m$ and any $x_{m+1}^{n}$,
assigns a joint probability distribution $p\left( x_{m+1}^{n}\left\vert
x_{m}\right. \right) $. We say that $p$ is \emph{exchangeable} if for any $%
n>m$, any $x_{n}\in \mathcal{X}^{n}$, the joint probability $p\left(
x_{m+1}^{n}\left\vert x_{m}\right. \right) $ assigned to $x_{m+1}^{n}$ is
invariant under any permutation $\sigma $ on $\left\{ 1,\ldots ,n\right\} $
which leaves the initial part of data $x^{m}$ unchanged.

Thus, exchangeability of SNML means that the joint distribution of SNML
conditioned on initial data $x^{m}$ is invariant under any permutation of
the data sequence $x_{m+1}^n$. Exchangeability of SNML is usually hard to
check. The natural question to ask is whether there exists an equivalent
characterization that can be easily read off of the distribution or not? In
this paper we show that there are only three types of exponential family
distributions that have exchangeable SNML. For none of the these three
families the denominator in Equation~\ref{nml} is finite. Hence, for all
one-dimensional exponential families in which NML is defined it will be
horizon dependent and can neither agree with SNML nor with a Bayesian
strategy.

\section{Main Results}

\label{sec:main_results} We now provide a sequence of lemmas and theorems
that lead up to our main result, Theorem~\ref{three_}. We provide a full
proof of Lemma~\ref{lem:first} and the final Theorem~\ref{three_} in the
main text, since, while not at all the most difficult ones, these results
contain the key ideas for our reasoning. All other results are followed by a
short proof sketch/idea; full proofs of these results are in the appendix.
We first provide a number of definitions that will be used repeatedly.

\subsection{Definitions}

From now on, whenever we refer to an `exponential family', unless we
explicitly state otherwise, we mean a an i.i.d. natural $1$-dimensional
family as in (\ref{exponential family}).

Our analysis below involves various parameterizations of natural exponential
families, in particular the natural, the mean (see below) and the geodesic
(only used in the appendix) parameterization. We typically use $%
\Theta $ for (a subset of) the natural parameter space, $M$ for (a subset
of) the corresponding mean-value space and $B$ for the geodesic space, but
if statements hold for general diffeomorphic parameterizations we use $%
\Gamma $ to denote (subsets of) the parameter space (natural, mean and geodesic  parameterizations are all instances of `diffeomorphic'
parameterizations \citep[page 611]{Grunwald07}). We then denote parameters
by $\gamma $ and we let $\hat{\gamma}(x^{n})$ be the maximum likelihood
estimate for data $x^{n}$. If $x^{n}$ has no or several ML estimates, $\hat{%
\gamma}(x^{n})$ is undefined. We let $\hat{\Gamma}_{n}$ be the subset of ML
estimates for data of length $n$, i.e. the set of $\gamma \in \Gamma $ such
that $\gamma =\hat{\gamma}(x^{n})$ for some data $x^{n}$ of length $n$, and
we let $\hat{\Gamma}^{\circ }$ be the set of $\gamma $ in the \emph{%
interior\/} of $\Gamma $ that are contained in $\hat{\Gamma}_{n}$ for \emph{%
some $n$}. (recall that we always assume that $\Gamma $ is closed). We will
also used symbols $\hat{M}_{n},\hat{M}^{\circ },\hat{B}_{n},\hat{B}^{\circ
},\ldots $ to denote corresponding sets in particular parameterizations. $%
D\left( \left. \gamma _{0}\right\Vert \gamma _{1}\right) :=D\left( \left.
p_{\gamma _{0}}\right\Vert p_{\gamma _{1}}\right) $ denotes the KL
divergence of $\gamma _{1}$ to $\gamma _{0}$.

We recall the standard fact that every natural exponential family can be
parameterized by the mean value of $X$: for each $\theta $ in the natural
parameter space $\Theta $, we can define $\mu _{\theta }:=E_{p_{\theta }}%
\left[ X\right] $; then the mapping from $\theta $ to $\mu _{\theta }$ is
1-to-1 and strictly increasing, and the image $\mu (\Theta )$ is the
mean-value parameter space $M$. We use $\hat{\mu}\left( x^{n}\right) $ for
the maximum likelihood estimator in the mean-value parameter space. We will
frequently use the \emph{\ variance function\/} $V(\mu )$ which maps the
mean of the family to its variance, i.e. $V(\mu )$ is the variance of $%
p_{\mu }$. We note that the Fisher information $I(\mu )$ in the mean-value
parameterization is the inverse of $V\left( \mu \right) $ \cite[Chapter 18]%
{Grunwald07}. We also introduce the standard deviation $\sigma $ as a
function of the mean by $\sigma \left( \mu \right) =V\left( \mu \right) ^{%
\nicefrac{1}{2}}.$

\begin{definition}[convex core]
\label{convex core} Consider a natural exponential family as in (\ref%
{exponential family}). Let $x_{0}=\inf \left\{ x:x\in \mbox{support of }%
h\right\} $, and $x_{1}=\sup \left\{ x:x\in \mbox{support of }h\right\} $.
The {\textbf{convex core}} is the interval from $x_{0}$ to $x_{1}$ with $%
x_{0}$ included if and only if $h$ has a point mass in $x_{0}$, and with $%
x_{1}$ included if and only if $h$ has a point mass in $x_{1}$. We denote
this the convex core by {\textbf{cc}}.
\end{definition}

For example for a Bernoulli model, the convex core is $[0,1]$, with $0$ and $%
1$ included. The intuition is that the convex core includes mean--values
that can be achieved by distributions corresponding to natural parameter
values $\infty $ and/or -$\,\infty $, in the cases where these are
well--defined.

\begin{definition}[maximal]
An exponential family with {\textbf{maximal mean-value parameter space}} is
an exponential family where the mean value parameter space equals the convex
core $cc$.
\end{definition}

For example, truncated exponential families such as Bernoulli $[0.2,0.8]$ do
not satisfy the maximal mean-value condition.

\subsection{Lemmas that Abstractly Characterize SNML-Exchangeability}

We now present three lemmas, which give an abstract characterization of SNML
exchangeability. Then in Section~\ref{sec:main} we will make these concrete,
leading to our main theorem.

We let $m$ be the smallest $n$ such that for all $x^{n}\in \mathcal{\ X}^{n}$%
, $\int p_{\gamma }\left( x^{n}\right) I\left( \gamma \right) ^{%
\nicefrac{1}{2}}~\mathrm{d}\gamma <\infty $ and $\int_{\mathcal{X}%
^{k-n}}\sup_{\gamma \in \Gamma }p_{\gamma }\left( x^{n},y^{k-n}\right) ~%
\mathrm{d}\lambda ^{k-n}\left( y^{k-n}\right) <\infty $ for $k\geq n$, i.e.
such that Jeffreys' posterior 
\begin{equation*}
\pi \left( \gamma \left\vert x^{n}\right. \right) :=\frac{p_{\gamma }\left(
x^{n}\right) I(\gamma )^{\nicefrac{1}{2}}}{\int p_{\gamma }(x^{n})I\left(
\gamma \right) ^{\nicefrac{1}{2}}~\mathrm{d}\gamma }
\end{equation*}
is proper (integrates to 1) for any conditioning sequence of length equal to
or longer than $m$, and that the conditional minimax regret is finite. In
most applications $m=1.$ Note that this implies that CNML and SNML
conditioned on an initial sequence of length $m$ exist \citep{Harremoes13},
so that all three prediction strategies (Bayes with Jeffreys, CNML and SNML)
are well-defined. From now on, each time we mention CNML/SNML we mean
\textquotedblleft CNML/SNML conditioned on an initial sequence of length $m$%
\textquotedblright .

We call the distribution $p_{\gamma }$ \emph{regular\/} if, for all $x^{n}$
with $\hat{\gamma}\left( x^{n}\right) =\gamma $, 
\begin{equation*}
\mu _{\gamma }=\hat{\mu}\left( x^{n}\right) =E_{p_{\gamma }}[X]=n^{{\text{-}}%
1}\sum_{i=1}^{n}x_{i},
\end{equation*}
i.e. in the mean-value parameter space, the ML estimator is equal to the
observed average. This is always the case if the ML estimate is in the
interior of $\Gamma $ \cite[Chapter 18]{Grunwald07}, but if the ML estimate
is on the boundary there can be exceptions, e.g. if $\Gamma $ is a truncated
parameter set. The following lemma is central:

\begin{lemma}
\label{lem:first} Consider a natural exponential family as in (\ref%
{exponential family}) where the parameter set $\Gamma $ is an interval. If
the SNML distribution for such a family is exchangeable then for all $n>m$
there is a constant $C_{n}$ such that for all \emph{regular\/} $\gamma
_{0}\in \hat{\Gamma}_{n}$, we have: 
\begin{equation}
\int_{\Gamma }\mathrm{e}^{{\text{-}}nD\left( \left. \gamma _{0}\right\Vert
\gamma \right) }I(\gamma )^{\nicefrac{1}{2}}\,\mathrm{d}\gamma =C_{n}.
\label{rpa}
\end{equation}%
If furthermore the family has maximal mean-value parameter space, then the
SNML distribution for such a family is exchangeable \emph{if and only if\/}
for all $n>m$ there is a constant $C_{n}$ such that for all $\gamma _{0}\in 
\hat{\Gamma}_{n}$ 
\begin{equation}
\int_{\Gamma }\mathrm{e}^{{\text{-}}nD\left( \left. \gamma _{0}\right\Vert
\gamma \right) }I(\gamma )^{\nicefrac{1}{2}}\,\mathrm{d}\gamma =C_{n}.
\label{rp}
\end{equation}
\end{lemma}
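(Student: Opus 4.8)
The plan is to use the fact recalled in the text from \citet{HedayatiB12} --- that SNML is exchangeable if and only if it coincides with the Bayesian strategy with Jeffreys' prior (and with CNML) --- and to translate this coincidence into the integral identity by rewriting the Jeffreys marginal pointwise in terms of the maximized likelihood. Throughout I write $R_k(x^k):=\int_\Gamma p_\gamma(x^k)\,I(\gamma)^{\nicefrac{1}{2}}\,\mathrm d\gamma$ for the (unnormalized) Jeffreys marginal and $\hat p(x^k):=\sup_{\gamma\in\Gamma}p_\gamma(x^k)$ for the maximized likelihood, both finite for $k\ge m$ by the choice of $m$. Two elementary facts are the backbone. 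First, the standard exponential-family identity $p_\gamma(x^k)=\hat p(x^k)\,\mathrm e^{-kD(\hat\gamma(x^k)\|\gamma)}$ holds whenever the MLE $\hat\gamma(x^k)$ is \emph{regular} (this is exactly where regularity enters: it makes $\hat\mu(x^k)$ equal the sample mean appearing in the Bregman form of $D$); integrating against Jeffreys' prior gives $R_k(x^k)=\hat p(x^k)\int_\Gamma \mathrm e^{-kD(\hat\gamma(x^k)\|\gamma)}I(\gamma)^{\nicefrac{1}{2}}\,\mathrm d\gamma$, whose last factor is the left-hand side of \eqref{rpa} at $\gamma_0=\hat\gamma(x^k)$. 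Second, since each $p_\gamma$ is a density, Fubini gives $\int_{\mathcal X}R_k(x^{k-1},y)\,\mathrm d\lambda(y)=R_{k-1}(x^{k-1})$, so the Bayes--Jeffreys conditional is $p_\pi(x_t\mid x^{t-1})=R_t(x^t)/R_{t-1}(x^{t-1})$, whereas by definition $p_{snml}(x_t\mid x^{t-1})=\hat p(x^t)/\int_{\mathcal X}\hat p(x^{t-1},y)\,\mathrm d\lambda(y)$.

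For the first (``only if'') assertion: exchangeability of SNML forces $p_{snml}=p_\pi$, hence $\hat p(x^n)/\int_{\mathcal X}\hat p(x^{n-1},y)\,\mathrm d\lambda(y)=R_n(x^n)/R_{n-1}(x^{n-1})$ for every $x^n$. Fix $n>m$ and write $x^n=(x^{n-1},x_n)$ with $\hat\gamma(x^n)$ regular; substituting the first fact and cancelling $\hat p(x^n)$ gives $\int_\Gamma \mathrm e^{-nD(\hat\gamma(x^n)\|\gamma)}I(\gamma)^{\nicefrac{1}{2}}\,\mathrm d\gamma = R_{n-1}(x^{n-1})\big/\int_{\mathcal X}\hat p(x^{n-1},y)\,\mathrm d\lambda(y)$. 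The right-hand side does not involve $x_n$, while the left-hand side depends on $x^n$ only through $\hat\gamma(x^n)$, which depends only on the sum $s_{n-1}+x_n$. Decomposing a given attainable sum in (at least) two ways as $s_{n-1}+x_n$ and chaining these equalities over the connected set of attainable length-$n$ sums shows the left-hand side is a single constant $C_n$ over all regular $\gamma_0\in\hat\Gamma_n$ (extending to regular boundary points by dominated-convergence continuity of the integral in $\gamma_0$), which is \eqref{rpa}.

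For the ``if'' direction, the maximal mean-value hypothesis guarantees that $\hat\mu(x^n)=n^{-1}\sum_i x_i$ always lies in $M$ and the MLE is always regular, so the first fact together with \eqref{rp} gives $R_n(x^n)=C_n\,\hat p(x^n)$ for \emph{every} $x^n$ and every $n>m$. Plugging this in at lengths $t$ and $t-1$, the constants cancel and the second fact gives $p_{snml}(x_t\mid x^{t-1})=\hat p(x^t)\big/\!\int_{\mathcal X}\hat p(x^{t-1},y)\,\mathrm d\lambda(y)=R_t(x^t)\big/\!\int_{\mathcal X}R_t(x^{t-1},y)\,\mathrm d\lambda(y)=R_t(x^t)/R_{t-1}(x^{t-1})=p_\pi(x_t\mid x^{t-1})$. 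Hence SNML equals the Bayesian mixture with Jeffreys' prior, which is manifestly exchangeable, and we are done.

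The main obstacle is not any of the displayed computations but the careful bookkeeping of regularity and of the boundary of the parameter space. In the ``only if'' part one must check that enough regular length-$n$ MLEs are realized, with enough freedom in the last outcome, for the chaining to actually cover $\hat\Gamma_n$ (and then pass to boundary points by continuity). In the ``if'' part one must verify that the maximal mean-value condition genuinely eliminates non-regular MLEs --- it fails exactly for truncated families such as Bernoulli$[0.2,0.8]$, where a boundary MLE has mean different from the sample average so the first fact breaks --- while still correctly accommodating the degenerate endpoint cases (natural parameter $\pm\infty$), where $D(\gamma_0\|\gamma)$ nonetheless remains finite for lattice families.
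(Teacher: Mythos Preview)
Your proof is correct and shares the paper's high-level strategy --- invoke the \citet{HedayatiB12} equivalence (SNML exchangeable $\Leftrightarrow$ SNML $=$ Bayes--Jeffreys $=$ CNML) and then rewrite the Jeffreys marginal via the robustness identity $p_\gamma(x^n)=\hat p(x^n)\,\mathrm e^{-nD(\hat\gamma(x^n)\Vert\gamma)}$ --- but you diverge from the paper in how you establish that the resulting ratio is constant across $\hat\Gamma_n$.

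The paper does not chain. It uses the CNML form of the equivalence, namely $p_\pi(x_{t+1}^n\mid x^t)=p_{cnml}^{(n)}(x_{t+1}^n\mid x^t)$ for \emph{every} $t$ with $m\le t<n$, which rearranges to $\int_\Gamma p_\gamma(x^n)I(\gamma)^{1/2}\,\mathrm d\gamma=C(n,x^t)\,\hat p(x^n)$ with a factor $C(n,x^t)$ depending only on $n$ and $x^t$. Taking $t=m$ shows $C$ cannot depend on $x_{m+1}^n$; since every quantity in the display is permutation-invariant, $C$ cannot depend on $x^m$ either, so $C=C_n$ outright. This sidesteps your connectedness argument entirely: the freedom to slide the conditioning point $t$ is what does the work.

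Your route, by contrast, uses only the single last-step equality (effectively $t=n-1$) and then argues via overlapping decompositions $s_n=s_{n-1}+x_n$ that the constant propagates across $\hat\Gamma_n$. This is valid once one checks that the sets $\{\hat\gamma(x^{n-1},y):y\in\mathcal X\}$ tile $\hat\Gamma_n$ in a connected way --- true for the families at hand, and you correctly flag it as the point requiring care --- but it is genuine extra work, particularly for discrete supports. The paper's argument buys freedom from this bookkeeping at the price of invoking the full CNML identity rather than just one SNML step. Your ``if'' direction is essentially identical to the paper's.
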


The essence of Lemma \ref{lem:first} is that $C_n$ remains constant as $\gamma_0$
varies. This will be key to proving our main result.

\begin{proof}
\cite{HedayatiB12} showed that, if $\Gamma $ is an interval, then SNML
exchangeability is equivalent to that Bayes with Jeffreys prior and
CNML coincide. Thus, equivalently, we must have, for \emph{all\/} $%
x_{1},\cdots ,x_{n}\in \mathcal{X}^{n}$, and \emph{all\/} $t$, such that $%
n>t\geq m$, 
\begin{equation}
p_{\pi }\left( \left. x_{t+1}^{n}\right\vert x^{t}\right)
=p_{cnml}^{(n)}\left( \left. x_{t+1}^{n}\right\vert x^{t}\right) .
\label{eq:jeffreys_equal_cnml}
\end{equation}%
Since 
\begin{equation*}
p_{\pi }\left( \left. x_{t+1}^{n}\right\vert x^{t}\right) =\int_{\Gamma
}p_{\gamma }\left( x_{t+1}^{n}\right) \,\mathrm{d}\pi \left( \theta
\left\vert x^{t}\right. \right) =\int_{\Gamma }p_{\gamma }\left(
x_{t+1}^{n}\right) \frac{p_{\gamma }\left( x^{t}\right) I(\gamma )^{%
\nicefrac{1}{2}}}{\int_{\Gamma }p_{\gamma ^{\prime }}\left( x^{t}\right)
I\left( \gamma ^{\prime }\right) ^{\nicefrac{1}{2}}~\mathrm{d}\gamma
^{\prime }}\,\mathrm{d}\gamma ,
\end{equation*}%
and 
\begin{equation*}
p_{cnml}^{(n)}\left( \left. x_{t+1}^{n}\right\vert x^{t}\right) =\frac{p_{%
\hat{\gamma}\left( x^{n}\right) }\left( x^{n}\right) }{\int_{\mathcal{X}%
^{n-t}}p_{\hat{\gamma}(x^{t},y^{n-t})}\left( x^{t}y^{n-t}\right) \,\mathrm{d}%
\lambda ^{n-t}\left( y^{n-t}\right) }
\end{equation*}%
in the diffeomorphic parametrization $\Gamma $, (\ref{eq:jeffreys_equal_cnml}%
) is equivalent to 
\begin{equation}
\int_{\Gamma }p_{\gamma }\left( x^{n}\right) I(\gamma )^{\nicefrac{1}{2}}~%
\mathrm{d}\gamma =C\left( n,x^{t}\right) \times p_{\hat{\gamma}\left(
x^{n}\right) }\left( x^{n}\right) ,  \label{eq:equality_proof_lemma3}
\end{equation}%
where 
\begin{equation*}
C\left( n,x^{t}\right) =\frac{\int_{\Gamma }p_{\gamma ^{\prime }}\left(
x^{t}\right) I\left( \gamma ^{\prime }\right) ^{\nicefrac{1}{2}}~\mathrm{d}%
\gamma ^{\prime }}{\int_{\mathcal{X}^{n-t}}p_{\hat{\gamma}\left(
x^{t},y^{n-t}\right) }\left( x^{t}y^{n-t}\right) \,\mathrm{d}\lambda
^{n-t}\left( y^{n-t}\right) }.
\end{equation*}%
We now prove that $C\left( n,x^{t}\right) =C_{n}$, i.e. it may depend on $n$
but \emph{it does not depend on $x_{1},\ldots ,x_{n}$}. The key observation
is that (\ref{eq:equality_proof_lemma3}) is satisfied for any $t\geq m$, in
particular for $t=m$, so that $C\left( n,x^{t}\right) $ cannot depend on $%
x_{m+1}^{n}$. However, since $C\left( n,x^{t}\right) $ and all other terms
in (\ref{eq:equality_proof_lemma3}) are invariant under any permutation of $%
x^{t}$, we conclude that $C\left( n,x^{t}\right) $ does not depend on the
whole sequence $x^{n}$.

Now we divide both sides of (\ref{eq:equality_proof_lemma3}) by $p_{\hat{%
\gamma}(x^{n})}(x^{n})$ and we exponentiate inside the integral. This gives: 
\begin{equation}
\int_{\Gamma }\mathrm{e}^{{\text{-}}\ln \frac{p_{\hat{\gamma}(x^{n})}(x^{n})%
}{p_{\gamma }(x^{n})}}I(\gamma )^{\nicefrac{1}{2}}~\mathrm{d}\gamma =C_{n}.
\label{emp}
\end{equation}%
We have thus shown that, assuming $\Gamma $ is an interval, SNML
exchangeability is equivalent to the condition that (\ref{emp}) holds for a
fixed $C_{n}$, for all $x^{n}\in \mathcal{X}^{n}$.

Now for Part (1), Let $\gamma _{0}=\hat{\gamma}\left( x^{n}\right) $. We now
use the celebrated robustness property of exponential families 
\citep[Section 19.3,
Eq. 19.12]{Grunwald07}). 
This property says that for all $\gamma _{0}$ such that $p_{\gamma _{0}}$ is
regular, for all $x^{n}$ with $\hat{\gamma}(x^{n})=\gamma $, we have 
\begin{equation}
nD\left( \left. {\gamma }_{0}\right\Vert \gamma \right) =\ln \frac{p_{\hat{%
\gamma}(x^{n})}\left( x^{n}\right) }{p_{\gamma }\left( x^{n}\right) };
\label{eq:robust}
\end{equation}%
the result follows.

For Part (2), we note that, if the mean-value parameter space is maximal,
then it must be an interval, and all points in this space must be regular 
\cite[Section 19.3, Eq. 19.10]{Grunwald07}. The only-if direction follows
immediately by Part (1). To see the converse, we note that if the mean-value
parameter space is maximal, then the maximum likelihood estimator exists and
is unique for all $x^{n}\in \mathcal{X}^{n}$ (see \cite{Csiszar03}), and all 
$\gamma \in \Gamma $ are regular. Hence Equation (\ref{eq:robust}) holds for
all $x^{n}\in \mathcal{X}^{n}$ so that (\ref{rp}) implies that (\ref{emp})
holds for all $x^{n}\in \mathcal{X}^{n}$ and therefore that SNML
exchangeability holds.
\end{proof}

We will also need a second lemma relating SNML exchangeability to maximality:

\begin{lemma}
\label{lem:dmaximal} Consider a natural exponential family as in (\ref%
{exponential family}). If the family is SNML exchangeable, then the
mean-value parameter space is maximal.
\end{lemma}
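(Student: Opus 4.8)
The plan is to prove the contrapositive: if the mean-value parameter space $M$ is \emph{not} maximal, then SNML is not exchangeable. So suppose $M \subsetneq cc$. Since $M$ is an interval with nonempty interior, non-maximality means $M$ has an endpoint $\mu^\ast$ that lies strictly inside the convex core $cc$; without loss of generality say $M$ is bounded above by $\mu^\ast = \sup M$, and $\mu^\ast \in cc$ (so outcomes $x > \mu^\ast$, or at least arbitrarily close to the right end of $cc$, occur with positive probability mass/density under members of the family). The key point is that once we condition on an initial sequence $x^m$ and then see further data whose running average exceeds $\mu^\ast$, the maximum likelihood estimate gets "pinned" at the boundary $\mu^\ast$, and the CNML predictions inherit a qualitatively different, non-Bayesian behavior.

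First I would make precise the pinning phenomenon: for data $x^n$ with empirical average $\bar{x} > \mu^\ast$, the supremum $\sup_{\mu \in M} p_\mu(x^n)$ is attained (in the limit) at $\mu = \mu^\ast$, because the likelihood as a function of $\mu$ is unimodal with interior mode at $\bar x \notin M$, hence increasing throughout $M$. Thus on the region of sequences with $\bar x > \mu^\ast$, the Shtarkov/CNML numerator $\sup_\theta p_\theta(x^n)$ equals $p_{\mu^\ast}(x^n)$, which as a function of $x^n$ is just a product of i.i.d. densities — in particular it is a genuine probability density in $x^n$ there. I would then use the abstract characterization already available. Since exchangeability of SNML is equivalent (via \cite{HedayatiB12}, as used in the proof of Lemma~\ref{lem:first}) to the identity
\begin{equation*}
\int_{\Gamma} p_\gamma(x^n) I(\gamma)^{\nicefrac12}\,\mathrm{d}\gamma = C_n \cdot p_{\hat\gamma(x^n)}(x^n)
\end{equation*}
holding for \emph{all} $x^n \in \mathcal{X}^n$ with a constant $C_n$ independent of $x^n$, I would derive a contradiction by comparing this identity across two regimes: sequences with $\bar x$ strictly below some interior point of $M$ (where $\hat\gamma(x^n) = \hat\mu(x^n) = \bar x$ is regular and interior, and the left side is the usual Jeffreys marginal) versus sequences with $\bar x > \mu^\ast$ (where $\hat\gamma(x^n)$ is pinned at $\mu^\ast$ and $p_{\hat\gamma(x^n)}(x^n) = p_{\mu^\ast}(x^n)$ decays like a fixed i.i.d. product while the left side, an integral of $p_\gamma(x^n)$ against the \emph{fixed} weight $I(\gamma)^{1/2}$ over the \emph{fixed} bounded interval $\Gamma$, has a different (slower or structurally different) decay as $\bar x \to$ the right edge of $cc$). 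Making the ratio of the two sides vary with $x^n$ within the pinned regime — e.g. by letting all coordinates of $x^n$ tend to the right end of $cc$, or by comparing a pinned sequence to a permutation-equivalent one — contradicts the constancy of $C_n$.

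The main obstacle, and where I would spend the most care, is the asymptotic/ratio analysis in the pinned regime: showing that $\int_\Gamma p_\gamma(x^n) I(\gamma)^{1/2}\,\mathrm{d}\gamma$ and $p_{\mu^\ast}(x^n)$ genuinely differ by a non-constant factor as $x^n$ ranges over sequences with $\bar x > \mu^\ast$. One clean way to isolate this: fix $x^m$ and consider, for a single extra outcome, the CNML conditional density $p_{cnml}^{(m+1)}(x_{m+1}\mid x^m)$; on the pinned set $\{x_{m+1} : \text{average of } x^{m+1} > \mu^\ast\}$ the CNML numerator is $p_{\mu^\ast}(x^{m+1})$ up to the normalizing integral, whereas the Jeffreys predictive $p_\pi(x_{m+1}\mid x^m) = \int_\Gamma p_\gamma(x_{m+1})\,\mathrm{d}\pi(\gamma\mid x^m)$ is a fixed mixture over $\Gamma$ that does \emph{not} collapse to the single point $\mu^\ast$; since a non-degenerate mixture of exponential-family densities cannot equal a single exponential-family density times a constant on a set of positive measure (the log-ratio would have to be affine in $x$, forcing the mixture to be a point mass), these two predictive densities disagree on the pinned set, so SNML $\neq$ CNML$^{(m+1)}$ in conditional form, hence SNML is not Bayes-with-Jeffreys, hence (again by \cite{HedayatiB12}) not exchangeable. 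I would also need the easy supporting facts that $m$ (as defined before Lemma~\ref{lem:first}) is finite for such truncated families and that the pinned set has positive $\lambda$-measure, both of which follow from $\mu^\ast$ lying in the interior of $cc$.
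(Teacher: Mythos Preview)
Your approach is genuinely different from the paper's. The paper never analyses the ``pinned'' MLE region. Instead it combines Lemma~\ref{lem:first} with a Laplace approximation of $\int_M e^{-nD(\mu_0\Vert\mu)}\sigma(\mu)^{-1}\,\mathrm{d}\mu$ in $n$: for every $\mu_0$ in the interior of $M$ this integral is asymptotic to $(2\pi/n)^{1/2}$, hence so is $C_n$; by continuity in $\mu_0$ the same limit is forced at the boundary point $\mu_0$ of $M$. But if that boundary point lies strictly inside the convex core, one can embed $M$ in a slightly larger interval so that $\mu_0$ becomes interior, and then the \emph{half}-Gaussian Laplace approximation over $\{\mu\in M:\mu\ge\mu_0\}$ already yields $\tfrac12(2\pi/n)^{1/2}$, a contradiction. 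So the paper argues asymptotically in $n$, while you argue at essentially fixed $n$ via the boundary behaviour of the MLE.

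Your route can be made to work, but the parenthetical justification of the decisive step --- ``a non-degenerate mixture of exponential-family densities cannot equal a single exponential-family density times a constant on a set of positive measure (the log-ratio would have to be affine in $x$)'' --- is not a proof and is where the real content sits. Equality is only claimed on the pinned set, not globally, so affinity of $\log p_{\mu^\ast}$ in $x$ does not by itself constrain the mixture there. A clean repair: divide the identity $\int_M p_\mu(x^n)I(\mu)^{1/2}\,\mathrm{d}\mu=C_n\,p_{\mu^\ast}(x^n)$ by $p_{\mu^\ast}(x^n)$ and write $S=\sum_i x_i$ to obtain
\[
G(S)\;:=\;\int_M e^{(\theta(\mu)-\theta^\ast)S}\,e^{-n(A(\theta(\mu))-A(\theta^\ast))}\,I(\mu)^{1/2}\,\mathrm{d}\mu\;=\;C_n
\]
for every achievable $S>n\mu^\ast$. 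Since $\theta(\mu)-\theta^\ast\le 0$ with equality only at the single point $\mu=\mu^\ast$, the map $S\mapsto G(S)$ is \emph{strictly} decreasing, so it cannot equal $C_n$ at two distinct sums. It remains to produce two achievable sums in the pinned region; because the support contains at least two points and some support point exceeds $\mu^\ast$, this holds for all sufficiently large $n$. With this monotonicity argument in place your proof goes through and is arguably more elementary than the paper's, avoiding asymptotics; the paper's Laplace route, by contrast, is more uniform and reuses machinery already needed for the later theorems.
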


\begin{proofsketch}
In our definition of exponential families we require that the parameter set $%
\Gamma $ has non-empty interior, thus we may assume that it contains an
interval. We can then show by approximating the integral in (\ref{rpa}) by a
Gaussian integral using standard Laplace-approximation techniques (as in
e.g. \cite[Chapter 7]{Grunwald07}) that, for general 1-dimensional
exponential families, the integral in (\ref{rpa}) converges to $\left( 
\nicefrac{2
\pi}{n}\right) ^{\nicefrac{1}{2}}$ for any $\gamma _{0}$ in the interior of $%
\Gamma $. If SNML exchangeability holds, then we can show using Lemma~\ref%
{lem:first} and continuity that this must also hold for all boundary points
of $\Gamma $. But if the parameter space is not maximal, then the same
standard Laplace approximation of the integral in (\ref{rpa}) gives that the
integral converges to $\left( \nicefrac{1}{2}\right) \left( 
\nicefrac{2
\pi}{n}\right) ^{\nicefrac{1}{2}}$ and we have a contradiction.
\end{proofsketch}

\subsection{Preparing and Stating the Main Theorem}

\label{sec:main} In the following we will use the \emph{Tweedie exponential
families} of order $\nicefrac{3}{2}$ \cite{Jorgensen1997}. These are natural
exponential families characterized by a variance function of the form $%
V\left( \mu \right) =k\mu ^{\nicefrac{3}{2}}$, where $\mu $ is the mean and $%
V\left( \mu \right) $ is the variance function defined earlier (i.e. $V(\mu )
$ is the variance of $p_{\mu }$). Each of the elements is a compound Poisson
distribution. It is obtained as follows. Let $X_{i}$ denote i.i.d.
exponentially distributed random variables with mean $\nu $ and let $N$
denote an independent Poisson distributed random variable with mean  $\nu k^{%
{\text{-}}2}.$ Then the elements in the exponential family are distributions
of random variables of the form 
\begin{equation*}
Z=\sum_{i=1}^{N}X_{i}
\end{equation*}%
for different values of the parameter $\nu $ \citep{Jorgensen1997}. It is
interesting to note that such distributions have a point mass in 0 so that
the left tail gives a finite contribution to the Shtarkov integral but the
right tail is light and gives an infinite contribution to the Shtarkov
integral. Hence this family does not have finite minimax regret.

\begin{lemma}
\label{lem2} The following three types of exponential families are SNML
exchangeable: The full Gaussian location families with fixed $\sigma ^{2}>0$%
, the full Gamma distributions with shape parameter $k>0,$ and the full
Tweedie family of order $\nicefrac{3}{2}$.
\end{lemma}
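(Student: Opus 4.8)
The plan is to apply part (2) of Lemma~\ref{lem:first}: in each of the three families the (natural, mean, or geodesic) parameter set is an interval, so it suffices to check (i) that the mean-value parameter space is maximal, and (ii) that for every $n>m$ the quantity
\[
J_n(\gamma_0)\ :=\ \int_\Gamma \mathrm{e}^{-nD(\gamma_0\Vert\gamma)}I(\gamma)^{\nicefrac{1}{2}}\,\mathrm{d}\gamma
\]
is finite and independent of $\gamma_0\in\hat\Gamma_n$. Fact (i) is immediate from the definitions: the Gaussian location family has $h$ supported on all of $\mathbb{R}$, with mean-value space $\mathbb{R}$; the Gamma family with fixed shape $k$ has mean-value space $(0,\infty)$, which equals its convex core; and the Tweedie $\nicefrac{3}{2}$ family, being compound Poisson, has a point mass at $0$, and its mean-value space equals the convex core $[0,\infty)$. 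One also checks in the standard way that $m=1$ for all three, so that CNML and SNML conditioned on a single observation are well-defined. Everything thus reduces to proving (ii), after which Lemma~\ref{lem:first}(2) yields SNML exchangeability.

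To compute $J_n$ it is convenient to pass to the geodesic parameterization $\beta=\beta(\mu)=\int I(\mu)^{\nicefrac{1}{2}}\,\mathrm{d}\mu=\int \mathrm{d}\mu/\sigma(\mu)$, in which the Fisher information is identically $1$; then Jeffreys' prior is Lebesgue measure and $J_n(\beta_0)=\int_B \mathrm{e}^{-nD(\beta_0\Vert\beta)}\,\mathrm{d}\beta$, with $B$ the $\beta$-image of $\Gamma$. Using $I(\mu)=1/V(\mu)$ and the identity $D(\mu_0\Vert\mu)=\int_\mu^{\mu_0}(\mu_0-s)/V(s)\,\mathrm{d}s$ together with the variance functions $V(\mu)=\sigma^2$, $V(\mu)=\mu^2/k$ (Gamma, $k$ the shape), $V(\mu)=k\mu^{\nicefrac{3}{2}}$ (Tweedie, $k$ the scale constant), one finds: for the Gaussian, $\beta=\mu/\sigma$, $B=\mathbb{R}$, $D(\beta_0\Vert\beta)=(\beta_0-\beta)^2/2$; for the Gamma, $\beta=\sqrt{k}\,\ln\mu$, $B=\mathbb{R}$, $D(\beta_0\Vert\beta)=k\bigl(\mathrm{e}^{(\beta_0-\beta)/\sqrt{k}}-1-(\beta_0-\beta)/\sqrt{k}\bigr)$. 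In both cases $D$ is a function of $\beta_0-\beta$ alone and $B=\mathbb{R}$, so the translation $\beta\mapsto\beta-\beta_0$ shows $J_n(\beta_0)$ does not depend on $\beta_0$ (it equals $\sqrt{2\pi/n}$ for the Gaussian; for the Gamma it is a $\gamma_0$-free constant expressible through the Gamma function, asymptotically $\sqrt{2\pi/n}$). In mean coordinates this is simply the translation-, and the scale-invariance of these two families.

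The Tweedie $\nicefrac{3}{2}$ family is the case that requires genuine work, and is the step I expect to be the main obstacle, since there $\beta\propto\mu^{\nicefrac{1}{4}}$, so $B$ is only the half-line $(0,\infty)$ rather than all of $\mathbb{R}$, and the invariance that saves us is multiplicative rather than additive. A short computation gives $D(\beta_0\Vert\beta)=(\beta_0^2-\beta^2)^2/(8\beta^2)$, equivalently $D(\mu_0\Vert\mu)=2(\sqrt{\mu_0}-\sqrt{\mu})^2/(k\sqrt{\mu})$. The substitution $\beta=\beta_0\mathrm{e}^{\phi}$ converts $(\beta_0^2-\beta^2)^2/\beta^2$ into $4\beta_0^2\sinh^2\phi$, so that
\[
J_n(\beta_0)=\beta_0\int_{-\infty}^{\infty}\mathrm{e}^{-\frac{n}{2}\beta_0^2\sinh^2\phi}\,\mathrm{e}^{\phi}\,\mathrm{d}\phi .
\]
Writing $\mathrm{e}^{\phi}=\cosh\phi+\sinh\phi$, the $\sinh\phi$ contribution vanishes by oddness, and the $\cosh\phi$ contribution becomes, after $y=\sinh\phi$, the elementary Gaussian integral $\beta_0\int_{-\infty}^{\infty}\mathrm{e}^{-\frac{n}{2}\beta_0^2 y^2}\,\mathrm{d}y=\sqrt{2\pi/n}$; hence $J_n(\beta_0)=\sqrt{2\pi/n}$ for every $\beta_0$, and the degenerate boundary value $\beta_0=0$ (forced when all observations are $0$) gives the same constant $\int_0^\infty \mathrm{e}^{-n\beta^2/8}\,\mathrm{d}\beta=\sqrt{2\pi/n}$. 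Alternatively, working directly in mean coordinates, the substitutions $w=\sqrt{\mu}$ and then $w=\sqrt{\mu_0}\,s$ write $J_n(\mu_0)$ as an explicit $\mu_0$-dependent prefactor times $\int_0^\infty \mathrm{e}^{-\alpha(s+1/s)}s^{-\nicefrac{1}{2}}\,\mathrm{d}s=2K_{\nicefrac{1}{2}}(2\alpha)$ with $\alpha\propto n\sqrt{\mu_0}$, and the closed form $K_{\nicefrac{1}{2}}(z)=\sqrt{\pi/(2z)}\,\mathrm{e}^{-z}$ makes all $\mu_0$-dependence cancel. Either way, (ii) holds for every $\gamma_0$ in the maximal mean-value space, hence in particular for all $\gamma_0\in\hat\Gamma_n$, and part (2) of Lemma~\ref{lem:first} finishes the proof.
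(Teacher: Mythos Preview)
Your proof is correct and follows the same overall strategy as the paper: check maximality and then verify condition~(\ref{rp}) of Lemma~\ref{lem:first}(2). The paper works in the mean-value parameterization throughout, using translation invariance for the Gaussian, the substitution $t=\gamma/\gamma_0$ (scale invariance) for the Gamma, and for the Tweedie family the substitution $\gamma=t^4 n^{-2}$, which reduces the integral to $\int_0^\infty \exp\bigl(-(t^2-n\gamma_0^{1/2})^2/t^2\bigr)\,\mathrm{d}t$; it then simply asserts this is independent of $\gamma_0$ (a Cauchy--Schl\"omilch/Glasser-type identity, not spelled out).

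Your passage to the geodesic parameterization is a mild but pleasant simplification: it collapses the Gaussian and Gamma cases into a single translation-invariance argument on $B=\mathbb{R}$, rather than two separate ones. For the Tweedie family your hyperbolic substitution $\beta=\beta_0\mathrm{e}^{\phi}$ followed by the $\cosh/\sinh$ decomposition and $y=\sinh\phi$ is a genuinely different and cleaner route than the paper's: you obtain the explicit value $\sqrt{2\pi/n}$ directly, whereas the paper's argument stops one step short and appeals to an unproved classical integral identity. Your explicit check of the boundary case $\beta_0=0$ (all observations zero) is a detail the paper does not address.
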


\begin{proofsketch}
It is straightforward to check that all three families have maximal
mean-value parameter space. The result now follows by checking that (\ref{rp}%
) holds for these families, which is relatively straightforward by taking
derivatives of the cumulant generating function.
\end{proofsketch}

\begin{remark}
\label{rem:nakagami} \emph{\textrm{What we call \textquotedblleft Gamma"
here includes also Pareto, Laplace, Rayleigh, Levy, Nakagami and many other
families of distribution that are derived from the Gamma family by a smooth
one--to--one transformation. As the next lemma shows smooth one--to--one
transformations preserve SNML exchangeability.}}
\end{remark}

\begin{lemma}
\label{one--to--one transformation} Suppose $\left\{ \left. p_{\gamma
}(\,\cdot \,)\right\vert \gamma \in \Gamma \right\} $ indexes an exponential
family for a r.v. $X$ that is SNML exchangeable. Let $Y=f\left( X\right) $
for some smooth one--to--one function $f$ and let $q_{\gamma }(\,\cdot \,)$
be the density of $Y$ under $p_{\gamma }(\,\cdot \,)$. Then the family $%
\left\{ \left. q_{\gamma }\left( \cdot \right) \right\vert \gamma \in \Gamma
\right\} $ is SNML exchangeable as well.
\end{lemma}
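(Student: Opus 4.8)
\begin{proofsketch}
The plan is to show that a smooth one--to--one transformation of the sample space changes the log--loss game only by a term that is parameter--independent and symmetric in the outcomes, so that SNML, CNML and Jeffreys' posterior all transform in the obvious ``pushforward'' fashion and exchangeability is inherited. (Note that $\{q_{\gamma}\}$ is in general a non--natural exponential family, with statistic $f^{-1}(y)$, but SNML exchangeability is a property of the induced sequential strategy and the formula~(\ref{cnml}) makes sense here regardless.) Write $g=f^{-1}$, and for a sequence $y^{n}$ put $x_{i}=g(y_{i})$; the change--of--variables formula gives $q_{\gamma}(y^{n})=p_{\gamma}(x^{n})\,J(y^{n})$ with $J(y^{n}):=\prod_{i=1}^{n}|g'(y_{i})|$ independent of $\gamma$. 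Hence the maximum likelihood estimate is preserved, $\hat{\gamma}_{q}(y^{n})=\hat{\gamma}_{p}(x^{n})$, and $\sup_{\gamma}q_{\gamma}(y^{n})=J(y^{n})\sup_{\gamma}p_{\gamma}(x^{n})$.

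First I would check that the integer $m$ introduced before Lemma~\ref{lem:first} --- the shortest conditioning length for which Jeffreys' posterior is proper and CNML/SNML are well defined --- is the same for the two families, so that we really are comparing the same objects. The Fisher information $I(\gamma)$ is unchanged under a bijective reparametrization of the sample space, since $\partial_{\gamma}\ln q_{\gamma}(y)=\partial_{\gamma}\ln p_{\gamma}(g(y))$; thus Jeffreys' prior is common to both families, and Jeffreys' posterior after $y^{n}$ is proportional to $q_{\gamma}(y^{n})I(\gamma)^{\nicefrac{1}{2}}=J(y^{n})\,p_{\gamma}(x^{n})I(\gamma)^{\nicefrac{1}{2}}$, i.e.\ it equals Jeffreys' posterior after $x^{n}$. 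Likewise, substituting $z_{j}=f(w_{j})$ in $\int\sup_{\gamma}q_{\gamma}(y^{n},z^{k-n})\,\mathrm{d}\lambda^{k-n}(z^{k-n})$ and using $|g'(f(w))|=1/|f'(w)|$ turns this integral into $J(y^{n})\int\sup_{\gamma}p_{\gamma}(x^{n},w^{k-n})\,\mathrm{d}\lambda^{k-n}(w^{k-n})$. Since $x^{n}\mapsto y^{n}$ is a bijection between $\mathcal{X}^{n}$ and $\mathcal{Y}^{n}$, properness and finiteness transfer and the two values of $m$ coincide.

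The core step is to run the same substitution through the CNML formula~(\ref{cnml}): in $q_{cnml}^{(n)}(y_{t+1}^{n}\mid y^{t})$ the Jacobian $J(y^{t})$ of the conditioning block cancels between numerator and denominator, while the Jacobian of the ``future'' block survives, so that
\[
q_{cnml}^{(n)}\bigl(y_{t+1}^{n}\mid y^{t}\bigr)=\Bigl(\textstyle\prod_{i=t+1}^{n}|g'(y_{i})|\Bigr)\,p_{cnml}^{(n)}\bigl(x_{t+1}^{n}\mid x^{t}\bigr).
\]
Taking $t=n-1$ gives $q_{snml}(y_{t}\mid y^{t-1})=|g'(y_{t})|\,p_{snml}(x_{t}\mid x^{t-1})$, and multiplying these from $t=m+1$ to $n$ yields $q_{snml}(y_{m+1}^{n}\mid y^{m})=\bigl(\prod_{i=m+1}^{n}|g'(y_{i})|\bigr)\,p_{snml}(x_{m+1}^{n}\mid x^{m})$; in words, the SNML joint for $Y$ conditioned on $y^{m}$ is the coordinatewise pushforward under $f$ of the SNML joint for $X$ conditioned on $x^{m}$.

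Finally I would conclude exchangeability. Let $\sigma$ be a permutation of $\{1,\dots,n\}$ fixing $\{1,\dots,m\}$ pointwise, and write $y'_{i}=y_{\sigma(i)}$, so that $(y')^{m}=y^{m}$ and $x'_{i}=g(y'_{i})=x_{\sigma(i)}$. The factor $\prod_{i=m+1}^{n}|g'(y'_{i})|$ equals $\prod_{i=m+1}^{n}|g'(y_{i})|$ because it is a product over the index set $\{m+1,\dots,n\}$, and $p_{snml}((x')_{m+1}^{n}\mid x^{m})=p_{snml}(x_{m+1}^{n}\mid x^{m})$ by the assumed SNML exchangeability of the $X$--family. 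Hence $q_{snml}((y')_{m+1}^{n}\mid (y')^{m})=q_{snml}(y_{m+1}^{n}\mid y^{m})$, i.e.\ the $Y$--family is SNML exchangeable. This lemma has no deep obstacle; the only points requiring care are the bookkeeping in the change of variables for the Shtarkov integral, so that exactly the conditioning Jacobian (and nothing else) cancels, and the verification that $m$ is shared --- and, if $f'$ happens to vanish at isolated points, the remark that all these identities still hold $\lambda$--almost everywhere, which is enough for the integrals and conditional densities involved.
\end{proofsketch}
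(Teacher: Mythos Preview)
Your proof is correct and follows essentially the same route as the paper: both arguments use the change--of--variables identity $q_{\gamma}(y^{n})=p_{\gamma}(x^{n})\prod_{i}|g'(y_{i})|$ to show that the SNML joint for $Y$ conditioned on $y^{m}$ equals the SNML joint for $X$ conditioned on $x^{m}$ times the symmetric factor $\prod_{i=m+1}^{n}|g'(y_{i})|$, and then read off exchangeability from the permutation invariance of that factor. Your version is slightly more thorough --- you explicitly verify that the conditioning length $m$ and the Fisher information (hence Jeffreys' prior) are unchanged under the transformation --- whereas the paper leaves these points implicit and works directly with the product formula for the SNML conditionals.
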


\begin{proofsketch}
This is (almost) immediate from the definition of exchangeability.
\end{proofsketch}

\begin{example}As an example consider a random variable $X$ with a Gamma
distribution of the form $Gamma\left( \nicefrac{1}{2},\nicefrac{c}{2}\right) 
$ with density%
\begin{equation*}
\left( \frac{c}{2\pi}\right) ^{\nicefrac{1}{2}}x^{{\text{-}}%
\nicefrac{1}{2}}\mathrm{e}^{{\text{-}}\frac{xc}{2}}.
\end{equation*}%
Now if $X$ goes through the one--to--one transformation $f\left( X\right) =%
\nicefrac{1}{X}$ then 
\begin{equation*}
\frac{1}{X}\,\sim \,Inverse\mbox{-}Gamma\left( %
\nicefrac{1}{2},\nicefrac{c}{2}\right) 
\end{equation*}
with density 
\begin{equation*}
\left( \frac{c}{2\pi}\right) ^{\nicefrac{1}{2}}x^{{\text{-}}%
\nicefrac{3}{2}}\mathrm{e}^{{\text{-}}\frac{c}{2x}}.
\end{equation*}%
This is the same as the density of a $Levy\left( 0,c\right) .$ Hence $%
Levy\left( 0,c\right) $ is also SNML exchangeable. It is indeed easy to
directly verify the SNML exchangeability of $Levy\left( 0,c\right) $ using
Lemma~\ref{lem:first}.\end{example}

\begin{theorem}
\label{disjunction of conditions} Consider a natural exponential family as
in (\ref{exponential family}). A
necessary condition for SNML exchangeability is that the standard deviation
as function of the mean satisfies the differential equation 
\begin{equation}
\left( \frac{\mathrm{d}\sigma }{\mathrm{d}\mu }\right) ^{2}+3\sigma \frac{%
\mathrm{d}^{2}\sigma }{\mathrm{d}\mu ^{2}}=\mathrm{const}(\mu ).
\label{eq:sigmadiff}
\end{equation}
\end{theorem}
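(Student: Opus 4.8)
The plan is to combine Lemma~\ref{lem:dmaximal} and Lemma~\ref{lem:first} with a second-order Laplace expansion of the integral appearing in (\ref{rpa})--(\ref{rp}). By Lemma~\ref{lem:dmaximal}, SNML exchangeability forces the mean-value parameter space $M$ to be maximal, hence an interval on whose interior every point is regular; Lemma~\ref{lem:first} then says that for every $n>m$ there is a constant $C_n$ with
\begin{equation*}
F_n(\mu_0):=\int_{M}\mathrm{e}^{-nD(\mu_0\Vert\mu)}\,I(\mu)^{\nicefrac{1}{2}}\,\mathrm{d}\mu=C_n
\end{equation*}
for all $\mu_0$ in the interior of $M$, where we work in the mean-value parameterization throughout, so that $I(\mu)=1/V(\mu)=1/\sigma(\mu)^2$. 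The key point is that the right-hand side does not depend on $\mu_0$; the differential equation (\ref{eq:sigmadiff}) will be exactly what this forces on $V$, equivalently on $\sigma$.

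First I would expand $F_n(\mu_0)$ by Laplace's method about the interior minimum $\mu=\mu_0$ of $D(\mu_0\Vert\cdot)$. From $\partial_\mu D(\mu_0\Vert\mu)=(\mu-\mu_0)/V(\mu)$ one reads off $\partial_\mu D|_{\mu_0}=0$, $\partial_\mu^2 D|_{\mu_0}=1/V(\mu_0)=I(\mu_0)$, $\partial_\mu^3 D|_{\mu_0}=-2V'(\mu_0)/V(\mu_0)^2$ and $\partial_\mu^4 D|_{\mu_0}=-3V''(\mu_0)/V(\mu_0)^2+6V'(\mu_0)^2/V(\mu_0)^3$ (primes denoting $\mathrm{d}/\mathrm{d}\mu$); combining these with the Taylor coefficients of the amplitude $I(\mu)^{\nicefrac{1}{2}}=V(\mu)^{-\nicefrac{1}{2}}$ at $\mu_0$, the standard two-term Laplace formula yields
\begin{equation*}
F_n(\mu_0)=\sqrt{\frac{2\pi}{n}}\left(1+\frac{d_1(\mu_0)}{n}+O\!\left(\tfrac{1}{n^2}\right)\right),\qquad d_1(\mu_0)=\frac{V''(\mu_0)}{8}-\frac{V'(\mu_0)^2}{24\,V(\mu_0)}.
\end{equation*}
The leading coefficient is $1$ precisely because $I(\mu_0)^{\nicefrac{1}{2}}/\sqrt{\partial_\mu^2 D|_{\mu_0}}=1$; this re-derives the $\sqrt{2\pi/n}$ already used in the proof of Lemma~\ref{lem:dmaximal}. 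Substituting $V=\sigma^2$ (so $V'=2\sigma\sigma'$, $V''=2\sigma'^2+2\sigma\sigma''$, $V'^2/V=4\sigma'^2$) puts this in the clean form $d_1(\mu_0)=\tfrac{1}{12}\bigl(\sigma'(\mu_0)^2+3\sigma(\mu_0)\sigma''(\mu_0)\bigr)$. A tidier route to the same coefficient is to pass to the geodesic parameterization $\beta$, in which $I\equiv1$ so the amplitude is constant and the integral is just $\int\mathrm{e}^{-nD}\,\mathrm{d}\beta$, and where $\partial_\beta^3 D|_{\beta_0}=-\,\mathrm{d}\sigma/\mathrm{d}\mu$.

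Next, because $F_n(\mu_0)=C_n$ does not depend on $\mu_0$, neither does $d_1(\mu_0)$: for interior points $\mu_0,\mu_1$ the difference $F_n(\mu_0)-F_n(\mu_1)$ vanishes for every $n$, while its asymptotics are $\sqrt{2\pi/n}\,\bigl(d_1(\mu_0)-d_1(\mu_1)\bigr)/n+o(n^{-3/2})$, so $d_1(\mu_0)=d_1(\mu_1)$. (Lemma~\ref{lem:first} only asserts the identity for $\mu_0\in\hat M_n$, but $\bigcup_{n>m}\hat M_n$ is dense in the interior of $M$ and $V$ is analytic there, so $d_1$ is continuous and hence constant throughout.) Thus $\tfrac{1}{12}\bigl(\sigma'^2+3\sigma\sigma''\bigr)\equiv\mathrm{const}$, which is exactly (\ref{eq:sigmadiff}). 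As a sanity check, for the Gaussian, Gamma (shape $k$) and Tweedie $\nicefrac{3}{2}$ families $d_1$ is constant (equal to $0$, $\tfrac{1}{12k}$ and $0$), consistent with Lemma~\ref{lem2}.

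The main obstacle is the second paragraph: carrying out the Laplace expansion carefully enough to trust the $1/n$ coefficient. Apart from the routine but genuinely error-prone bookkeeping of the third- and fourth-order terms, one must justify validity of the expansion --- that the contribution to $F_n(\mu_0)$ from $\mu$ bounded away from $\mu_0$, and from the two ends of the interval $M$, is exponentially small, and that the remainder is $O(1/n^2)$ locally uniformly in $\mu_0$ --- which follows from analyticity of the cumulant generating function $A$ (hence of $V$) on the interior of $M$ and from $D(\mu_0\Vert\mu)$ being strictly monotone in $\mu$ on either side of $\mu_0$. The density-plus-continuity passage from $\hat M_n$ to the full interior is minor by comparison.
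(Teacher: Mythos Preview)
Your proposal is correct and follows essentially the same route as the paper's proof: both reduce, via Lemma~\ref{lem:dmaximal} and Lemma~\ref{lem:first}, to a second-order Laplace expansion of the integral (\ref{rp})/(\ref{eq:geodesicform}) and read off that the $n^{-1}$ correction to the leading $\sqrt{2\pi/n}$ must be independent of the base point, which is exactly (\ref{eq:sigmadiff}).

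The only real difference is cosmetic. The paper passes to the geodesic parameterization $\beta$ first, so that the amplitude $I^{\nicefrac{1}{2}}$ becomes constant and the expansion is a pure expansion of $\mathrm{e}^{-nD}$; it then computes $D_3=-\sigma'$ and $D_4=\sigma'^2-2\sigma\sigma''$ directly and obtains $5D_3^2-3D_4=2(\sigma'^2+3\sigma\sigma'')=\mathrm{const}$. You stay in the mean-value parameterization, carry the amplitude $V^{-\nicefrac{1}{2}}$ through the standard two-term Laplace formula, and arrive at $d_1=\tfrac{1}{12}(\sigma'^2+3\sigma\sigma'')$ --- the same quantity up to a constant factor. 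You even flag the geodesic route as the tidier alternative, which is precisely what the paper does. Your treatment of the tails (bounded-away contributions exponentially small, remainder $O(n^{-2})$) and the density argument passing from $\hat M_n$ to the full interior also mirror the paper's handling of the $g,h$ terms in (\ref{eq:expsmall}) and the continuity step.
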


\begin{proofsketch}
By Lemma~\ref{lem:dmaximal} we may assume that the family has maximal
mean-value parameter space. A fifth-order (!) Taylor expansion of
(\ref{rp}) rewritten in the geodesic parameterization (see
(\ref{eq:geodesicform}) in the appendix) gives terms of
different order in $n,$ and each term should be constant. Equation \ref%
{eq:sigmadiff} corresponds to the first non-trivial term in the expansion.
\end{proofsketch}

\begin{theorem}
\label{sufficiency} Consider a natural exponential family as in (\ref%
{exponential family}) with {maximal mean-value parameter space}. A necessary
condition for SNML exchangeability is that the variance function is given by%
\begin{equation}
V\left( \mu \right) =\left( k\mu +\ell \right) ^{2}  \label{eq:varians1}
\end{equation}%
or%
\begin{equation}
V\left( \mu \right) =\left( k\mu +\ell \right) ^{\nicefrac{3}{2}}
\label{eq:varians2}
\end{equation}%
for some constants $k$ and $\ell .$
\end{theorem}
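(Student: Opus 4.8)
The plan is to integrate the differential equation (\ref{eq:sigmadiff}) supplied by Theorem~\ref{disjunction of conditions} and then to discard the solutions that are not genuine candidates. By Theorem~\ref{disjunction of conditions}, SNML exchangeability forces $\sigma(\mu)=V(\mu)^{\nicefrac{1}{2}}$ to satisfy $(\sigma')^{2}+3\sigma\sigma''=c$ on the interior of the mean-value parameter space (which, being maximal, is an interval), for some constant $c$. If $\sigma$ is constant we are already in case (\ref{eq:varians1}) with $k=0$ (the Gaussian case), so I would assume $\sigma$ non-constant and lower the order of the equation: regarding $\sigma'$ as a function of $\sigma$ and writing $q:=(\sigma')^{2}$, the chain rule turns (\ref{eq:sigmadiff}) into the first-order \emph{linear} equation $q+\frac{3}{2}\sigma\frac{\mathrm{d}q}{\mathrm{d}\sigma}=c$, which (with integrating factor $\sigma^{\nicefrac{2}{3}}$) integrates to $(\sigma')^{2}=c+C_{1}\sigma^{-\nicefrac{2}{3}}$ for an integration constant $C_{1}$.

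Next I would read off the candidate variance functions from the two degenerate regimes of this solution. If $C_{1}=0$, then $\sigma'=\sqrt{c}$ is constant, so $\sigma$ is affine in $\mu$ and $V=\sigma^{2}$ has the form (\ref{eq:varians1}). If $c=0$ (and $C_{1}\neq0$), then $(\sigma')^{2}=C_{1}\sigma^{-\nicefrac{2}{3}}$ integrates to $\sigma^{\nicefrac{4}{3}}$ being affine in $\mu$, so that $V=\sigma^{2}$ has the form (\ref{eq:varians2}). What remains is the branch $c\neq0$, $C_{1}\neq0$, which produces a one-parameter family of solutions of neither form; indeed one checks directly that for such a solution neither $\sigma'$ (ruling out (\ref{eq:varians1})) nor $\sigma^{\nicefrac{1}{3}}\sigma'$ (ruling out (\ref{eq:varians2})) is constant.

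To eliminate this last branch I would return to the Taylor expansion of (\ref{rp}) in the geodesic parameterization (see (\ref{eq:geodesicform})) underlying the proof of Theorem~\ref{disjunction of conditions}: beyond the term that yields (\ref{eq:sigmadiff}), the next non-trivial term of that same expansion must also be constant in $\mu$, giving a second identity in $\sigma$ and its derivatives. Substituting $(\sigma')^{2}=c+C_{1}\sigma^{-\nicefrac{2}{3}}$ together with its consequences $\sigma''=-\frac{C_{1}}{3}\sigma^{-\nicefrac{5}{3}}$ and the corresponding expressions for the higher derivatives (obtained by differentiating the former) reduces this second identity to an explicit function of $\sigma$; demanding that it be constant forces $cC_{1}=0$, which rules out the bad branch and leaves exactly (\ref{eq:varians1}) and (\ref{eq:varians2}). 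The main obstacle is precisely this final step: cleanly isolating the second constraint from the (already fifth-order) expansion, and then grinding it down to something that visibly demands $cC_{1}=0$ — the computation is heavy but mechanical, and it is where the higher-order terms of the expansion actually do the work.
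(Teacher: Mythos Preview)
Your proposal is correct and follows the same two-step route as the paper: use the differential equation (\ref{eq:sigmadiff}) coming from the $O(n^{-3/2})$ term of the expansion of (\ref{eq:geodesicform}), and then invoke the $O(n^{-5/2})$ term (which requires pushing the Taylor expansion to seventh order) as a second constraint to kill the spurious branch. The only difference is bookkeeping. You first integrate (\ref{eq:sigmadiff}) explicitly to the first integral $(\sigma')^{2}=c+C_{1}\sigma^{-2/3}$ and then plan to substitute this into the second constraint to force $cC_{1}=0$. The paper instead uses (\ref{eq:sigmadiff}) directly to rewrite $\sigma'',\sigma''',\sigma''''$ --- and hence $D_{3},\dots,D_{6}$ --- as polynomials in $\sigma,\sigma',c$; plugging these into the $O(n^{-5/2})$ constraint, all $(\sigma')^{4}$ terms cancel and one is left with $-\tfrac{64}{3}\,c\,(\sigma')^{2}+41c^{2}=\mathrm{const}(\mu)$, so that either $c=0$ or $\sigma'$ is constant. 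Via your first integral this is exactly the condition $cC_{1}=0$, so the two computations are the same computation in different coordinates; your explicit first integral makes the case split slightly more transparent, while the paper's version avoids having to justify the local reduction $q=q(\sigma)$.
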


\begin{proofsketch}
The differential equation (\ref{eq:sigmadiff}) can be rephrased in terms of
the variance function. Two solutions are (\ref{eq:varians1}) or (\ref%
{eq:varians2}). Other potential solutions are ruled out by a higher-order
(in fact 7th-order!!) expansion.
\end{proofsketch}

Now we are ready to state our main theorem. We need one more definition: we
say that a full exponential family of form (\ref{exponential family}) is a 
\emph{linear transformation\/} of another full exponential family if, for
some fixed $a,b$, it is the set of distributions given by (\ref{exponential
family}), with each occurrence of $x$ replaced by $ax+b$.

\begin{remark}
\emph{By Remark~\ref{rem:nakagami}, linear transformations preserve SNML
exchangeability. In a Gaussian location family translating by $b$ replaces a
distribution by another distribution of the same exponential family and the
Gaussian location families are the only families with this property. Scaling
of a Gamma distribution by positive $a$ gives another Gamma distribution in
the same exponential family and the Gamma families are the only exponential
families with this property.}
\end{remark}

\begin{theorem}
The only natural 1-dimensional i.i.d. exponential families that have
exchangeable SNML are the following three: \label{three_}

\begin{itemize}
\item The full Gaussian location families with arbitrary but fixed $%
\sigma^{2}>0$.

\item The full Gamma exponential family with fixed shape parameter and
linear transformations of it.

\item The full Tweedie exponential family of order $\nicefrac{3}{2}$ and
linear transformations of it.
\end{itemize}
\end{theorem}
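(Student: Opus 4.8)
The plan is to obtain the theorem by assembling the lemmas and theorems already established; essentially no new analysis is required, only a careful dictionary between variance functions and named families.

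For the \emph{sufficiency} direction I would start from Lemma~\ref{lem2}, which already establishes SNML exchangeability for the full Gaussian location family, the full Gamma family with fixed shape, and the full Tweedie family of order $\nicefrac{3}{2}$. A linear map $x\mapsto ax+b$ with $a\neq 0$ is a smooth one-to-one transformation, so Lemma~\ref{one--to--one transformation} upgrades this to SNML exchangeability of every linear transformation of these three families. I would also record the elementary observation (noted in the remark just before the statement) that a linear transformation of a Gaussian location family with fixed variance $\sigma^{2}$ is again a Gaussian location family, with fixed variance $a^{2}\sigma^{2}$; this is why the Gaussian entry carries no separate ``and linear transformations of it'', while for the Gamma and Tweedie families a shift genuinely produces a new family that must be listed.

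For the \emph{necessity} direction, I would take an arbitrary natural $1$-dimensional i.i.d.\ exponential family with exchangeable SNML. By Lemma~\ref{lem:dmaximal} its mean-value parameter space is maximal, so Theorem~\ref{sufficiency} forces the variance function to be $V(\mu)=(k\mu+\ell)^{2}$ or $V(\mu)=(k\mu+\ell)^{\nicefrac{3}{2}}$. It then remains to identify which families have these variance functions. I would reparameterize by an affine change of the observable: if $Y=aX+b$ with $a>0$, then $Y$ again generates a natural exponential family with variance function $\tilde V(\nu)=a^{2}V\big((\nu-b)/a\big)$, and choosing $b$ to cancel the additive constant inside the bracket turns $(k\mu+\ell)^{2}$ into $c\nu^{2}$ when $k\neq 0$, turns $(k\mu+\ell)^{\nicefrac{3}{2}}$ into $c\nu^{\nicefrac{3}{2}}$ when $k\neq 0$, and leaves a constant variance function in the degenerate case $k=0$. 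I would then invoke the classical fact \cite{Jorgensen1997} that on a given interval of means a natural exponential family is uniquely determined by its variance function: a constant variance function is that of a Gaussian location family, $c\nu^{2}$ is that of a Gamma family (shape $1/c$, up to the reflection $\nu\mapsto-\nu$), and $c\nu^{\nicefrac{3}{2}}$ is that of the Tweedie family of order $\nicefrac{3}{2}$. Since the mean-value parameter space is maximal, the relevant interval of means is the full one --- all of $\Re$ in the Gaussian case and the natural half-line in the Gamma and Tweedie cases --- so the family is the \emph{full} version, and undoing the affine substitution exhibits the original family as a linear transformation of a full Gaussian location, full Gamma, or full Tweedie-$\nicefrac{3}{2}$ family. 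Together with the sufficiency direction this yields exactly the three-item list.

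The genuinely hard work underlying all of this has already been done in Theorem~\ref{disjunction of conditions} and Theorem~\ref{sufficiency} --- the high-order Taylor and Laplace expansions that reduce exchangeability to the differential equation~(\ref{eq:sigmadiff}) and then solve it. Given those, the two points of the present argument that still need care are: (i) keeping track of the domain of the mean parameter well enough that ``maximal mean-value parameter space'' really does force the \emph{full} Gamma and Tweedie families and excludes truncated variants; and (ii) invoking correctly the uniqueness of a natural exponential family given its variance function on a prescribed interval of means, which is the step that licenses passing from the functional form of $V$ to the named family.
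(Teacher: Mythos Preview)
Your proposal is correct and follows the same assembly as the paper: Lemma~\ref{lem2} plus Lemma~\ref{one--to--one transformation} for sufficiency, then Lemma~\ref{lem:dmaximal} and Theorem~\ref{sufficiency} for necessity, followed by identification of the families from their variance functions. The only difference is in how the quadratic case $V(\mu)=(k\mu+\ell)^{2}$ is dispatched: the paper cites Morris's classification of all quadratic-variance NEFs and observes that among them only the Gaussian and Gamma have perfect-square variance, whereas you affinely reduce to $V(\nu)=c\nu^{2}$ (or a constant) and invoke uniqueness of an NEF from its variance function directly---exactly the device the paper itself uses for the $\nicefrac{3}{2}$ case. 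Your route is thus a shade more uniform across the two cases, but the content is the same.
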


Before we prove this theorem, let us briefly discuss its generality.

As we already indicated below (\ref{exponential family}), every exponential
family defined with respect to a sufficient statistic $f(X)$ can be
re-expressed as a natural family with respect to $X$ as long as $f$ is
smooth and 1-to-1. Thus the theorem also determines SNML exchangeability for
general 1-dimensional i.i.d. exponential families with such $f.$ Namely, if
such a family, when mapped to a natural family, becomes the Gamma, Gaussian
or Tweedie $\nicefrac{3}{2}$ family, then it is SNML exchangeable; otherwise
it is not. The former is the case, for, for example, the Pareto and other
families mentioned in Remark~\ref{rem:nakagami}; the latter is the case,
for, for example, the Bernoulli and Poisson distributions.

\begin{proof}
Lemma~\ref{lem2} says that these three families are SNML exchangeable. As we
know that SNML exchangeability can only happen for families with maximal
mean-value parameter space (Lemma~\ref{lem:dmaximal}), we focus on these
families only. 
Thus, it is left to show that no other family with maximal mean-value
parameter space is SNML exchangeable. 

Theorem~\ref{sufficiency} gives the necessary condition for SNML
exchangeability in terms of the variance function. Now we look at each case
separately. The first part of the disjunction is the Equation \ref%
{eq:varians1}, where the variance function is quadratic. Exponential
families with quadratic variance functions have been classified by \cite%
{Morris82}. His result is that modulo linear transformations the only
exponential families with quadratic variance functions are Gaussian,
Poisson, Gamma, binomial, negative binomial, and the exotic hyperbolic secant
distribution. Of these only the Gaussian and the Gamma families have the
desired form. We note that the exponential distributions are special cases
of Gamma distributions.

Now we get to the second case where the variance function is given by
Equation \ref{eq:varians2}. If $c_{1}=0$ we get an exponential family where
the variance is constant, i.e. the family is the Gaussian translation
family. Then the term $k$ corresponds to a translation of the exponential
family and we may assume that $k=0.$ If $c_{1}\neq 0$ we can scale up or
down and obtain the equation 
\begin{equation}  \label{eq:beq}
V(\mu) =2\mu^{\nicefrac{3}{2}}.
\end{equation}%
There exists an exponential family with this variance function, namely the
Tweedie family of order $\nicefrac{3}{2}$ with $V\left( \mu \right) =2\mu ^{%
\nicefrac{3}{2}}$. Since exponential families are uniquely determined by
their variance function \cite{Morris82}, the Tweedie family of order $\nicefrac{3}{2}$
is the only family satisfying (\ref{eq:beq}).
\end{proof}

\section{Discussion}

\label{sec:conclusion}

The present paper has focused on 1-dimensional exponential families with
non-empty interior parameter spaces. Any model that admits a 1-dimensional
sufficient statistic can be embedded in a one dimensional exponential
family. One can prove that SNML exchangeability implies that the parameter
space must have non-empty interior, thus strengthening our results further,
but the limited space did not allow us to go into this problem here.

We do not have any general results for the multidimensional case, but we can
make a few observations: products of models that are SNML exchangeable are
also exchangeable. All multidimensional Gaussian location models can be
obtained in this way by a suitable choice of coordinate system. The only
other SNML exchangeable models we know of in higher dimensions are Gaussian
models where the mean is unknown and the scaling of the covariance matrix is
unknown. This can be seen from the fact that a sum of squared Gaussian
variables has a Gamma distribution. The Tweedie family of order $%
\nicefrac{3}{2}$ does not seem to play any interesting role in higher
dimensions, because it cannot be combined with the other distributions.

One of the consequences of this paper is that for 1-dimensional
exponential families, NML (if it is defined without conditioning) will
always be horizon dependent. We conjecture that this conclusion will
hold for arbitrary models. Only conditional versions of NML allow the
kind of consistency that we call SNML exchangeability, and even after
conditioning, SNML exchangeability is restricted to a few but very
important models.

\section*{Acknowledgements}
Wojciech Kot{\l}owski has been supported by the Foundation of Polish Science under the Homing Plus programme. We gratefully acknowledge the support of the NSF through grant
CCF-1115788 and of the Australian Research Council through Australian
Laureate Fellowship FL110100281.

\section*{Appendix}

In the proofs we introduce $\tau $ as short for $2\pi .$

\begin{proof}[Proof of Lemma \protect\ref{lem:dmaximal}]
Without loss of generality consider the mean-value parameter space. Assume
that the given exponential family is SNML-exchangeable and, without loss of
generality, that the parameter space contains an interval $\left[ \mu
_{0},\mu _{1}\right] $ with $\mu _{0}<\mu _{1}$. By Lemma~\ref{lem:first} we
have for all $n$, and all regular points in $x\in \hat{M}_{n}\cap \left[ \mu
_{0},\mu _{1}\right] $ that 
\begin{equation}
\int_{\lbrack \mu _{0},\mu _{1}]}\frac{\mathrm{e}^{{\text{-}}nD\left( x\Vert
\mu \right) }}{\sigma \left( \mu \right) }~\mathrm{d}\mu =C_{n}  \label{1}
\end{equation}%
is independent of $x$. Note that all points in the interior of $\left[ \mu
_{0},\mu _{1}\right] $ must be regular \cite[Section 19.3, Eq. 19.10]%
{Grunwald07}.

By a standard Laplace approximation of the integral in (\ref{1}) (done by a
Taylor approximation of the KL divergence, $D\left( x\Vert \mu \right)
\approx \frac{1}{2}\left( x-\mu \right) ^{2}V\left( x\right) ^{{\text{-}}1}$%
, so that for large $n$ the integral becomes approximately Gaussian) we get,
for each closed interval $M_{c}$ that is a subset of the convex core, that for
each $x$ in the interior of $M_{c}$, 
\begin{equation}
\frac{\int_{M_{c}}\frac{\mathrm{e}^{{\text{-}}nD\left( x\Vert \mu \right) }}{%
\sigma \left( \mu \right) }~\mathrm{d}\mu }{\left( \frac{\tau }{n}\right) ^{%
\nicefrac{1}{2}}}\rightarrow 1  \label{2}
\end{equation}%
and 
\begin{equation}
\frac{\int_{\{\mu \in M_{c}:\mu \geq x\}}\frac{\mathrm{e}^{{\text{-}}%
nD\left( x\Vert \mu \right) }}{\sigma \left( \mu \right) }~\mathrm{d}\mu }{%
\left( \frac{\tau }{n}\right) ^{\nicefrac{1}{2}}}\rightarrow \frac{1}{2}
\label{2bis}.
\end{equation}

For a precise statement and proof of these results, see e.g. \cite[Theorem
8.1 combined with Eq. (8.14)]{Grunwald07}. Combining (\ref{2}) with (\ref{1}%
), taking $M_{c}=\left[ \mu _{0},\mu _{1}\right] $, it follows that $%
C_{n}\rightarrow (\nicefrac{\tau}{n})^{\nicefrac{1}{2}}$. Now for each $%
\epsilon >0$ there is an $n$ such that $x\in \hat{M}_{n}\cap M$ and $|x-\mu
_{0}|<\epsilon $. Hence by continuity the equality (\ref{1}) also holds for $%
x=\mu _{0}$, so we get 
\begin{equation}
\frac{\int_{\lbrack \mu _{0},\mu _{1}]}\frac{\mathrm{e}^{{\text{-}}nD\left(
\mu _{0}\Vert \mu \right) }}{\sigma \left( \mu \right) }~\mathrm{d}\mu }{%
\left( \frac{\tau }{n}\right) ^{\nicefrac{1}{2}}}\rightarrow 1  \label{3}.
\end{equation}%
Now assume by means of contradiction that the convex core $cc$ includes an $%
x^{\prime }<\mu _{0}$ with $x^{\prime }\not\in M$ ($M$ being the parameter
space of the family), and let $M^{\prime }=[x^{\prime },\mu _{1}]$. Then $%
\mu _{0}$ is in the interior of $M^{\prime }$ and so, taking $%
M_{c}=M^{\prime }$, (\ref{2bis}) with $x=\mu _{0}$ gives that the same
integral as in (\ref{3}) converges to $\nicefrac{1}{2}$; we have arrived at
a contradiction.

In the same way, one proves that there can be no $x^{\prime }> \mu_1$ with $%
x^{\prime }$ in the convex core. Thus, the interval must coincide with the
convex core, which is what we had to prove.
\end{proof}

\begin{proof}[Lemma~\protect\ref{lem2}]
For each of the families it is sufficient to prove that 
\begin{equation*}
\int_{cc}\frac{\mathrm{e}^{{\text{-}}nD\left( \left. \gamma _{0}\right\Vert {%
\gamma }\right) }}{\sigma \left( \gamma \right) }~\mathrm{d}\gamma 
\end{equation*}%
does not depend on $\gamma _{0}\in cc$ where $cc$ denotes the convex core of
the family.

In the Gaussian location family with variance $\sigma ^{2}$ we have $D\left(
\left. {\gamma }_{0}\right\Vert {\gamma }\right) =D\left( {0}\left\Vert {%
\gamma -\gamma }_{0}\right. \right) $, and $V\left( \gamma \right) =\sigma
^{2},$ so the integral is invariant because of the invariance of the
Lebesgue integral.

The scaling property of the Gamma families imply that $D\left( \left. {%
\gamma }_{0}\right\Vert {\gamma }\right) =D\left( {1}\left\Vert {\nicefrac{
\gamma}{ \gamma _{0}}}\right. \right) .$ For the Gamma family with shape
parameter $k$ we have $V\left( \gamma \right) =\gamma ^{2}/k.$ Hence the
integral equals 
\begin{eqnarray*}
\int_{0}^{\infty }\frac{\mathrm{e}^{{\text{-}}nD\left( \left. {\gamma }%
_{0}\right\Vert {\gamma }\right) }}{\left( \gamma ^{2}/k\right) ^{%
\nicefrac{1}{2}}}\,\text{d}\gamma  &=&k^{\nicefrac{1}{2}}\int_{0}^{\infty }%
\frac{\mathrm{e}^{{\text{-}}nD\left( {1}\left\Vert \nicefrac{\gamma}{\gamma
_{0}}\right. \right) }}{\gamma }\,\text{d}\gamma  \\
&=&k^{\nicefrac{1}{2}}\int_{0}^{\infty }\frac{\mathrm{e}^{{\text{-}}nD\left( 
{1}\Vert {t}\right) }}{t}\,\text{d}t,
\end{eqnarray*}%
where we have used the substitution $t=\nicefrac{\gamma}{\gamma_0}$. Hence
the integral does not depend on $\gamma _{0}.$

We consider the Tweedie family of order $\nicefrac{3}{2}$. Then the
divergence can be calculated as 
\begin{eqnarray*}
D\left( \left. \mu _{0}\right\Vert \mu _{1}\right)  &=&\int_{\mu _{0}}^{\mu
_{1}}\frac{\mu -\mu _{0}}{2\mu ^{\nicefrac{3}{2}}}~\mathrm{d}\mu  \\
&=&\left[ \mu ^{\nicefrac{1}{2}}+\mu _{0}\mu ^{{\text{-}}\nicefrac{1}{2}}%
\right] {}_{\mu _{0}}^{\mu _{1}} \\
&=&\mu _{1}^{\nicefrac{1}{2}}+\mu _{0}\mu _{1}^{{\text{-}}\nicefrac{1}{2}%
}-2\mu _{0}^{\nicefrac{1}{2}} \\
&=&\frac{\left( \mu _{1}^{\nicefrac{1}{2}}-\mu _{0}^{\nicefrac{1}{2}}\right)
^{2}}{\mu _{1}^{\nicefrac{1}{2}}}.
\end{eqnarray*}%
Therefore we have to prove that the following integral is constant 
\begin{eqnarray*}
\int_{0}^{\infty }\exp \left( {\text{-}}nD\left( \gamma _{0}\Vert \gamma
\right) \right) \sigma \left( \gamma \right) ^{{\text{-}}1}~\mathrm{d}\gamma
&=&\int_{0}^{\infty }\exp \left( {\text{-}}n\frac{\left( \gamma ^{%
\nicefrac{1}{2}}-\gamma _{0}^{\nicefrac{1}{2}}\right) ^{2}}{\gamma ^{%
\nicefrac{1}{2}}}\right) \gamma ^{{\text{-}}\nicefrac{3}{4}}~\mathrm{d}%
\gamma  \\
&=&\int_{0}^{\infty }\exp \left( {\text{-}}\frac{\left( n\gamma ^{%
\nicefrac{1}{2}}-n\gamma _{0}^{\nicefrac{1}{2}}\right) ^{2}}{n\gamma ^{%
\nicefrac{1}{2}}}\right) \gamma ^{\textrm{-}\nicefrac{3}{4}}~\mathrm{d}\gamma .
\end{eqnarray*}%
The substitution $\gamma =t^{4}n^{{\text{-}}2}$ gives 
\begin{equation*}
\frac{4}{n^{\nicefrac{1}{2}}}\int_{0}^{\infty }\exp \left( {\text{-}}\frac{%
\left( t^{2}-n\gamma _{0}^{\nicefrac{1}{2}}\right) ^{2}}{t^{2}}\right) ~%
\mathrm{d}t.
\end{equation*}%
This integral is independent of $\gamma _{0},$ which proves the theorem.
\end{proof}

\begin{proof}[Lemma~\protect\ref{one--to--one transformation}]
Since the family $p_{\gamma }(\,\cdot \,)$ is SNML exchangeable, for
any $n>m$ the following joint distribution is invariant under permutations
of $x^{n}$ that leaves $x^{m}$ invariant: 
\begin{equation}
p_{snml}\left( \left. x_{m+1}^{n}\right\vert x^{m}\right) =\prod_{t=m+1}^{n}%
\frac{\sup_{\gamma }p_{\gamma }(x^{t})}{\int_{\mathcal{X}}\sup_{\gamma
}p_{\gamma }(x^{t-1},x)~\mathrm{d}x}
\end{equation}%
Now under the $Y=f(X)$ transformation the density of $Y$ becomes 
\begin{equation}
q_{\gamma }(y)=p_{\gamma }(f^{{\text{-}}1}(y))\left\vert \frac{\mathrm{d}%
\,f^{-1}(y)}{\mathrm{d}\,y}\right\vert ~.
\end{equation}%
For the ease of notation we let $v(y)=\left\vert \frac{\mathrm{d}\,f^{{\text{%
-}}1}(y)}{\mathrm{d}\,y}\right\vert ~$. Hence $q_{\gamma }(y)=p_{\gamma
}\left( f^{{\text{-}}1}\left( y\right) \right) v(y)$ and 
\begin{eqnarray*}
p_{snml}\left( \left. y_{m+1}^{n}\right\vert y^{m}\right) 
&=&\prod_{t=m+1}^{n}\frac{\sup_{\gamma }q_{\gamma }(y^{t})}{\int_{\mathcal{X}%
}\sup_{\gamma }q_{\gamma }\left( y^{t-1},y\right) ~\mathrm{d}y} \\
&=&\prod_{t=m+1}^{n}\frac{\sup_{\gamma }q_{\gamma }\left( f(x_{1})\cdots
f(x_{t})\right) }{\int_{\mathcal{X}}\sup_{\gamma }q_{\gamma }\left(
f(x_{1})\cdots f(x_{t-1}),y\right) ~\mathrm{d}y} \\
&=&\prod_{t=m+1}^{n}\frac{\sup_{\gamma }p_{\gamma }\left( x_{1}\cdots
x_{t}\right) \prod_{j=1}^{t}v(y_{j})}{\int_{\mathcal{X}}\sup_{\gamma
}p_{\gamma }\left( x_{1}\cdots x_{t-1},f^{-1}(y)\right)
\prod_{j=1}^{t-1}v(y_{j})v(y)~\mathrm{d}y} \\
&=&\prod_{t=m+1}^{n}\frac{\sup_{\gamma }p_{\gamma }(x^{t})v(y_{t})}{\int_{%
\mathcal{X}}\sup_{\gamma }p_{\gamma }(x^{t-1},f^{-1}(y))v(y)~\mathrm{d}y} \\
&=&\prod_{t=m+1}^{n}\frac{\sup_{\gamma }p_{\theta }(x^{t})v(y_{t})}{\int_{%
\mathcal{X}}\sup_{\gamma }p_{\gamma }(x^{t-1},x)~\mathrm{d}x} \\
&=&p_{snml}\left( \left. x_{m+1}^{n}\right\vert x^{m}\right)
\prod_{t=m+1}^{n}v(y_{t})~.
\end{eqnarray*}%
Hence $p_{snml}\left( \left. y_{m+1}^{n}\right\vert y^{m}\right) $ too is
invariant under any permutation of $y^{n}$ leaving $y^{m}$ invariant, and
hence exchangeable. Note that in the last but one equation we used the
change of variable $f^{{\text{-}}1}(y)=x$ and the fact that $v(y)\mathrm{d}y=%
\mathrm{d}x~.$
\end{proof}

Now we are ready to state the next theorem which is simply a disjunction of
two conditions necessary for SNML exchangeability in a parameterization
called geodesic. The \emph{geodesic parameterization\/} is the
parameterization in which the Fisher information is constant. We will denote
parameters in this parameterization by $\beta $ with parameter set $B$. We
can reparameterize from the natural parameter space $\Theta _{\text{full}}$
to the geodesic space by setting: 
\begin{eqnarray}
\beta &=&\int_{\theta _{0}}^{\theta }I(s)^{\nicefrac{1}{2}}\,\mathrm{d}s
\label{geo} \\
&=&\int_{\mu _{0}}^{\mu }\frac{1}{\sigma \left( t\right) }~\mathrm{d}t,
\end{eqnarray}%
so that $\mathrm{d}\beta =I(\theta )^{\nicefrac{1}{2}}~\mathrm{d}\theta =%
\mathrm{d}\mu /\sigma \left( \mu \right) $. Note that this is a bijection.
This allows us to replace the integration measure in the condition of Lemma~%
\ref{lem:first} and we get a condition equivalent to (\ref{rp}): for any $n>m$ the
following is independent of $\beta _{0}\in \hat{B}^{n}$ 
\begin{equation}
\int_{B}\mathrm{e}^{{\text{-}}nD\left( \left. \beta _{0}\right\Vert \beta
\right) }\,\mathrm{d}\beta .  \label{eq:geodesicform}
\end{equation}

\begin{proof}[Theorem~\protect\ref{disjunction of conditions}]
We denote the integral in Equation \ref{eq:geodesicform} by $s(\beta _{0},n)$.
We may assume the family has maximal mean-value parameter space, so
that (\ref{eq:geodesicform}) must hold for all 
$\beta_0 \in \hat{B}_n$, all $n$.
First we will establish the following relation between the geodesic
parametrization and the mean value parameterization 
\begin{eqnarray*}
\frac{\partial }{\partial \beta }(\ldots ) &=&\frac{\mathrm{d}\mu }{\mathrm{d%
}\beta }\frac{\partial }{\partial \mu }(\ldots ) \\
&=&\sigma \left( \mu \right) \frac{\partial }{\partial \mu }(\ldots ),
\end{eqnarray*}%
because $\frac{\mathrm{d}\beta }{\mathrm{d}\mu }=\sigma^{{\text{-}}1}
\left( \mu \right) $. We use the fact that $D\left( \left. \beta
_{0}\right\Vert \beta \right) =D\left( \left. \mu _{0}\right\Vert \mu
\right) $, where $\mu =\mu \left( \beta \right) $ and $\mu _{0}=\mu \left(
\beta _{0}\right) $ are corresponding parameters in different
parametrizations. 
\begin{align}
D\left( \left. \beta _{0}\right\Vert \beta \right) & =\mu _{0}\cdot \left(
\theta _{0}-\theta \right) +A\left( \theta \right) -A\left( \theta
_{0}\right)  \notag \\
\frac{\partial D\left( \left. \beta _{0}\right\Vert \beta \right) }{\partial
\beta }& =\left( \mu -\mu _{0}\right) \cdot \sigma ^{{\text{-}}1},  \notag \\
\frac{\partial ^{2}D\left( \left. \beta _{0}\right\Vert \beta \right) }{%
\partial \beta ^{2}}& =1-\left( \mu -\mu _{0}\right) \cdot \sigma ^{{\text{-}%
}1}\frac{\mathrm{d}\sigma }{\mathrm{d}\mu }.  \label{eq:andenafledt}
\end{align}%
Hence $D_2 =1$, where $D_n$ denotes $\frac{\partial^n D\left( \left. \beta _{0}\right\Vert \beta \right) }{\partial
\beta^n } \Big|_{\beta = \beta_0}$ throughout this section. 

A Taylor expansion of Equation \ref{eq:geodesicform} as function of $n$ gives that
certain Taylor coefficients must equal zero and an elaborate calculation of
the Taylor coefficient leads to Equation \ref{eq:sigmadiff}.

Using a fifth-order Taylor expansion we will show the following: 
\begin{equation}
s(\beta _{0},n)=\Phi +n^{{\text{-}}\nicefrac{3}{2}}\cdot 3\tau ^{%
\nicefrac{1}{2}}\cdot u\left( \beta _{0}\right) +O\left( n^{{\text{-}}%
2}\right)   \label{s}
\end{equation}%
where 
\begin{equation}
u\left( \beta _{0}\right) =\frac{5}{2}\cdot \left( \frac{D_{3}}{3!}\right)
^{2}-\frac{D_{4}}{4!},
\end{equation}%
$\Phi =\frac{\tau ^{\nicefrac{1}{2}}}{n^{\nicefrac{1}{2}}}$ is a Gaussian
integral (scaled by $n$), and the $n^{-2}$ remainder term may be
negative or positive. 
Condition \ref{eq:sigmadiff} easily follows from Equation~\ref{s} as
follows: take $\beta _{0},\beta _{1}$ in $\hat{B}^{\circ }$. By Equation~\ref%
{eq:geodesicform} we must have that $s(\beta _{0},n)-s(\beta _{1},n)=0$ for
all large $n$. But by Equation~\ref{s} this difference is equal to 
\begin{equation*}
cn^{{\text{-}}\nicefrac{3}{2}}\cdot (u(\beta _{0})-u(\beta _{1}))+O(n^{{%
\text{-}}2})
\end{equation*}%
for a constant $c>0$ independent of $\beta _{0}$ and $\beta _{1}$. 
Since this must be $0$ for all large $n$ and since $u(\cdot )$ does not
depend on $n$, this can only be true if $u(\beta _{0})=u(\beta _{1})$. Since
we can do this for any $\beta _{0}$ and $\beta _{1}$, Condition~\ref%
{eq:sigmadiff} follows. \newline
Now we proceed to prove the claim in Equation~\ref{s}. Define $A=[\beta
_{0}-c,\beta _{0}+c]$ for some fixed $c>0$, taken small enough so that $A$
is a subset of the interior of $B$ (this is why needed to restrict to $\hat{B%
}^{\circ }$ rather than $\hat{B}_{n}$). We can write 
\begin{equation}
s(\beta _{0},n)=f(\beta _{0},n)+g(\beta _{0},n)+h(\beta _{0},n)
\label{eq:poekie}
\end{equation}%
where we define: 
\begin{equation*}
f:=\int_{\beta \in A}\mathrm{e}^{{\text{-}}nD(\beta _{0}\Vert \beta )}~%
\mathrm{d}\beta ,
\end{equation*}%
\begin{equation*}
g:=\int_{\beta >\beta _{0}+c}\mathrm{e}^{{\text{-}}nD\left( \left. \beta
_{0}\right\Vert \beta \right) }~\mathrm{d}\beta \ \ \ \ h:=\int_{\beta
<\beta _{0}-c}\mathrm{e}^{{\text{-}}nD\left( \left. \beta _{0}\right\Vert
\beta \right) }~\mathrm{d}\beta 
\end{equation*}%
(We write $f$ instead of $f(\beta _{0},n)$ whenever $\beta _{0}$ and $n$ are
clear from context; similarly for $g,h$).

We have 
\begin{equation}
g\leq \sup_{\beta ^{\prime }>\beta _{0}+c}\mathrm{e}^{{\text{-}}(n-m)D\left(
\left. \beta _{0}\right\Vert \beta ^{\prime }\right) }\int_{\beta >\beta
_{0}+c}\mathrm{e}^{{\text{-}}mD\left( \left. \beta _{0}\right\Vert \beta
\right) }~\mathrm{d}\beta \leq c_{2}\mathrm{e}^{{\text{-}}c_{3}n^{c_{4}}}
\label{eq:expsmall}
\end{equation}%
for some constants $c_{2},c_{3},c_{4}>0$. Here we used that $D\left( \left.
\beta _{0}\right\Vert \beta ^{\prime }\right) $ is increasing in $\beta
^{\prime }$ so that the $\sup $ is achieved at $\beta _{0}+c$, and the fact
that by definition $m$ was chosen such that the integral with $mD\left(
\left. \beta _{0}\right\Vert \beta \right) $ in the exponent is finite. We
can bound $h$ similarly. Thus, the error we make if we neglect the integral
outside the set $A$ is negligible, and we can now concentrate on
approximating $f$, the integral over $A$. We can write 
\begin{equation}
f\left( \beta _{0},n\right) =\int_{A}\mathrm{e}^{{\text{-}}n\frac{1}{2}%
\left( \beta _{0}-\beta \right) ^{2}}\left( \mathrm{e}^{{\text{-}}n\frac{%
D_{3}}{3!}\left( \beta _{0}-\beta \right) ^{3}}\mathrm{e}^{{\text{-}}n\frac{%
D_{4}}{4!}\left( \beta _{0}-\beta \right) ^{4}}\mathrm{e}^{{\text{-}}n\cdot
O\left( \beta _{0}-\beta \right) ^{5}}\right) ~\mathrm{d}\beta 
\label{eq:hoekie}
\end{equation}%
where the constant in front of the 5th-order term is bounded because we
require $A$ to be a compact subset of the interior of $B$. The fourth- and
fifth-order terms in the integral can itself be well approximated by a
first-order Taylor approximation of $\mathrm{e}^{x}$ and we can rewrite $f$
as 
\begin{equation*}
\int_{A}\mathrm{e}^{{\text{-}}n\frac{1}{2}\left( \beta _{0}-\beta \right)
^{2}}\left( \mathrm{e}^{{\text{-}}n\frac{D_{3}}{3!}\left( \beta _{0}-\beta
\right) ^{3}}(1+V)(1+W)\right) ~\mathrm{d}\beta 
\end{equation*}%
where $V={\text{-}}n\frac{D_{4}}{4!}\left( \beta _{0}-\beta \right)
^{4}+O\left( n^{2}\left( \beta _{0}-\beta \right) ^{8}\right) $ and $%
W=O\left( n\left( \beta _{0}-\beta \right) ^{5}\right) $. Similarly, the
second factor in the integral can be well-approximated by a second order
Taylor approximation of $\mathrm{e}^{x}=1+x+(\nicefrac{1}{2})x^{2}+O(x^{3})$
so that we can further rewrite $f$ as 
\begin{equation*}
\begin{array}{c}
\int_{A}\mathrm{e}^{{\text{-}}n\frac{1}{2}\left( \beta _{0}-\beta \right)
^{2}}(1+U)(1+V)(1+W)~\mathrm{d}\beta = \\ 
\int_{A}\mathrm{e}^{{\text{-}}n\frac{1}{2}\left( \beta _{0}-\beta \right)
^{2}}(1+U+V+W+UV+UW+WV+UVW)~\mathrm{d}\beta 
\end{array}%
\end{equation*}%
where 
\begin{equation*}
U={\text{-}}n\frac{D_{3}}{3!}\left( \beta _{0}-\beta \right) ^{3}+\frac{1}{2}%
n^{2}\left( \frac{D_{3}}{3!}\right) ^{2}\left( \beta _{0}-\beta \right)
^{6}+O\left( n^{3}\left( \beta _{0}-\beta \right) ^{9}\right) .
\end{equation*}%
Writing $\Phi _{A}:=\int_{A}\mathrm{e}^{{\text{-}}n\frac{1}{2}\left( \beta
_{0}-\beta \right) ^{2}}~\mathrm{d}\beta $ we can thus further rewrite $f$
as 
\begin{equation*}
f=\Phi _{A}+\int_{A}\mathrm{e}^{{\text{-}}n\frac{1}{2}\left( \beta
_{0}-\beta \right) ^{2}}\left( U+V+R_{1}+R_{2}\right) ~\mathrm{d}\beta 
\end{equation*}%
where $R_{1}$ and $R_{2}$ are remainder terms, 
\begin{align*}
R_{1}=UV=\,& O\left( n^{2}\left\vert \beta _{0}-\beta \right\vert
^{7}\right) +O\left( n^{3}\left( \beta _{0}-\beta \right) ^{10}\right)  \\
& +O\left( n^{4}\left\vert \beta _{0}-\beta \right\vert ^{13}\right)
+O\left( n^{3}\left\vert \beta _{0}-\beta \right\vert ^{11}\right) +O\left(
n^{4}\left( \beta _{0}-\beta \right) ^{14}\right)  \\
& +O\left( n^{5}\left\vert \beta _{0}-\beta \right\vert ^{17}\right) 
\end{align*}%
and 
\begin{equation*}
R_{2}=W(1+U+V+UV))=O\left( n\left\vert \beta _{0}-\beta \right\vert
^{5}\right) .
\end{equation*}%
Since $\int_{-\infty }^{\infty }|x|^{m}\mathrm{e}^{{\text{-}}nx^{2}}~\mathrm{%
d}x=O\left( n^{({\text{-}}m-1)/2}\right) $, we have $\int_{A}\mathrm{e}^{{%
\text{-}}n\frac{1}{2}\left( \beta _{0}-\beta \right) ^{2}}\left(
R_{1}+R_{2}\right) ~\mathrm{d}\beta =O\left( n^{{\text{-}}2}\right) $, and
hence we get 
\begin{equation*}
f=\Phi _{A}+\int_{A}\mathrm{e}^{{\text{-}}n\frac{1}{2}\left( \beta
_{0}-\beta \right) ^{2}}\left( U+V\right) ~\mathrm{d}\beta +O\left(
n^{-2}\right) .
\end{equation*}%
Now, using the fact that $\int_{-a}^{a}x^{3}\mathrm{e}^{{\text{-}}nx^{2}}~%
\mathrm{d}x=0$ for all $a>0$, the integral over the first term in $U$ is $0$%
. The final terms in $U$ and $V$ can be dealt with as the remainder terms
above, and we can rewrite $f$ further as 
\begin{equation*}
f=\Phi _{A}+\int_{A}\mathrm{e}^{{\text{-}}n\frac{1}{2}\left( \beta
_{0}-\beta \right) ^{2}}\left( \frac{1}{2}n^{2}\left( \frac{D_{3}}{3!}%
\right) ^{2}\left( \beta _{0}-\beta \right) ^{6}-n\frac{D_{4}}{4!}\left(
\beta _{0}-\beta \right) ^{4}\right) ~\mathrm{d}\beta +O\left( n^{-2}\right)
.
\end{equation*}%
If we integrate over the full real line rather than $A$ then the error we
make is of order $O(\mathrm{e}^{{\text{-}}cn})\leq O(n^{{\text{-}}2})$. The
integrals over the real line can be evaluated whence we get: 
\begin{eqnarray}
f &=&\Phi +\frac{n^{2}}{2}\left( \frac{D_{3}}{3!}\right) ^{2}\cdot \left( 15%
\frac{\tau ^{\nicefrac{1}{2}}}{n^{\nicefrac{7}{2}}}\right) -n\frac{D_{4}}{4!}%
\cdot \left( 3\frac{\tau ^{\nicefrac{1}{2}}}{n}\right) +O(n^{-2})  \notag \\
&=&\Phi +n^{{\text{-}}\nicefrac{3}{2}}\cdot \tau ^{\nicefrac{1}{2}}\cdot
\left( 15\left( \frac{D_{3}}{3!}\right) ^{2}-6\frac{D_{4}}{4!}\right)
+O\left( n^{-2}\right) .
\end{eqnarray}%
Combining with (\ref{eq:poekie}) and (\ref{eq:expsmall}) that there exists a
constant, such that for all $n\geq m$, all $\beta _{0}\in \hat{B}^{\circ }$,%
\begin{equation}
5\left( D_{3}\right) ^{2}-3D_{4}=\mathrm{const}(\beta _{0}).
\label{eq:condstaerk}
\end{equation}%
We rephrase condition (\ref{eq:condstaerk}) in terms of the mean value
parameterization, we calculate higher derivatives of the divergence based on
(\ref{eq:andenafledt}) 
\begin{align*}
\frac{\partial ^{3}D\left( \left. \beta _{0}\right\Vert \beta \right) }{%
\partial \beta ^{3}}& ={\text{-}}\frac{\mathrm{d}\sigma }{\mathrm{d}\mu }%
+\left( \mu -\mu _{0}\right) \cdot \left( \sigma ^{{\text{-}}1}\left( \frac{%
\mathrm{d}\sigma }{\mathrm{d}\mu }\right) ^{2}-\frac{\mathrm{d}^{2}\sigma }{%
\mathrm{d}\mu ^{2}}\right) , \\
\frac{\partial ^{4}D\left( \left. \beta _{0}\right\Vert \beta \right) }{%
\partial \beta ^{4}}& =\left( \frac{\mathrm{d}\sigma }{\mathrm{d}\mu }%
\right) ^{2}-2\sigma \frac{\mathrm{d}^{2}\sigma }{\mathrm{d}\mu ^{2}}+\left(
\mu -\mu _{0}\right) \cdot \left( {\text{-}}\sigma ^{{\text{-}}1}\left( 
\frac{\mathrm{d}\sigma }{\mathrm{d}\mu }\right) ^{3}+2\frac{\mathrm{d}\sigma 
}{\mathrm{d}\mu }\frac{\mathrm{d}^{2}\sigma }{\mathrm{d}\mu ^{2}}-\sigma 
\frac{\mathrm{d}^{3}\sigma }{\mathrm{d}\mu ^{3}}\right) , \\
\frac{\partial ^{5}D\left( \left. \beta _{0}\right\Vert \beta \right) }{%
\partial \beta ^{5}}& ={\text{-}}\left( \frac{\mathrm{d}\sigma }{\mathrm{d}%
\mu }\right) ^{3}+2\sigma \frac{\mathrm{d}\sigma }{\mathrm{d}\mu }\frac{%
\mathrm{d}^{2}\sigma }{\mathrm{d}\mu ^{2}}-3\sigma ^{2}\frac{\mathrm{d}%
^{3}\sigma }{\mathrm{d}\mu ^{3}} \\
+& \left( \mu -\mu _{0}\right) \cdot \left( \sigma ^{{\text{-}}1}\left( 
\frac{\mathrm{d}\sigma }{\mathrm{d}\mu }\right) ^{4}-3\left( \frac{\mathrm{d}%
\sigma }{\mathrm{d}\mu }\right) ^{2}\frac{\mathrm{d}^{2}\sigma }{\mathrm{d}%
\mu ^{2}}+2\sigma \left( \frac{\mathrm{d}^{2}\sigma }{\mathrm{d}\mu ^{2}}%
\right) ^{2}+\sigma \frac{\mathrm{d}\sigma }{\mathrm{d}\mu }\frac{\mathrm{d}%
^{3}\sigma }{\mathrm{d}\mu ^{3}}-\sigma ^{2}\frac{\mathrm{d}^{4}\sigma }{%
\mathrm{d}\mu ^{4}}\right) , \\
\frac{\partial ^{6}D\left( \left. \beta _{0}\right\Vert \beta \right) }{%
\partial \beta ^{6}}& =\left( \frac{\mathrm{d}\sigma }{\mathrm{d}\mu }%
\right) ^{4}-4\sigma \left( \frac{\mathrm{d}\sigma }{\mathrm{d}\mu }\right)
^{2}\frac{\mathrm{d}^{2}\sigma }{\mathrm{d}\mu ^{2}}-3\sigma ^{2}\frac{%
\mathrm{d}\sigma }{\mathrm{d}\mu }\frac{\mathrm{d}^{3}\sigma }{\mathrm{d}\mu
^{3}}+4\sigma ^{2}\left( \frac{\mathrm{d}^{2}\sigma }{\mathrm{d}\mu ^{2}}%
\right) ^{2}-4\sigma ^{3}\frac{\mathrm{d}^{4}\sigma }{\mathrm{d}\mu ^{4}} \\
+& \left( \mu -\mu _{0}\right) \cdot \left( \dots \right) .
\end{align*}

Then 
\begin{align*}
D_{3}& ={\text{-}}\frac{\mathrm{d}\sigma }{\mathrm{d}\mu }, \\
D_{4}& =\left( \frac{\mathrm{d}\sigma }{\mathrm{d}\mu }\right) ^{2}-2\sigma 
\frac{\mathrm{d}^{2}\sigma }{\mathrm{d}\mu ^{2}}, \\
D_{5}& ={\text{-}}\left( \frac{\mathrm{d}\sigma }{\mathrm{d}\mu }\right)
^{3}+2\sigma \frac{\mathrm{d}\sigma }{\mathrm{d}\mu }\frac{\mathrm{d}%
^{2}\sigma }{\mathrm{d}\mu ^{2}}-3\sigma ^{2}\frac{\mathrm{d}^{3}\sigma }{%
\mathrm{d}\mu ^{3}}, \\
D_{6}& =\left( \frac{\mathrm{d}\sigma }{\mathrm{d}\mu }\right) ^{4}-4\sigma
\left( \frac{\mathrm{d}\sigma }{\mathrm{d}\mu }\right) ^{2}\frac{\mathrm{d}%
^{2}\sigma }{\mathrm{d}\mu ^{2}}-3\sigma ^{2}\frac{\mathrm{d}\sigma }{%
\mathrm{d}\mu }\frac{\mathrm{d}^{3}\sigma }{\mathrm{d}\mu ^{3}}+4\sigma
^{2}\left( \frac{\mathrm{d}^{2}\sigma }{\mathrm{d}\mu ^{2}}\right)
^{2}-4\sigma ^{3}\frac{\mathrm{d}^{4}\sigma }{\mathrm{d}\mu ^{4}},
\end{align*}%
where, as before, $\mu _{0}=\mu \left( \beta _{0}\right) $. Further we get 
\begin{eqnarray*}
5\left( D_{3}\right) ^{2}-3D_{4} &=&5\left( -\frac{\mathrm{d}\sigma }{%
\mathrm{d}\mu }\right) ^{2}-3\left( \left( \frac{\mathrm{d}\sigma }{\mathrm{d%
}\mu }\right) ^{2}-2\sigma \frac{\mathrm{d}^{2}\sigma }{\mathrm{d}\mu ^{2}}%
\right) \\
&=&2\left( \frac{\mathrm{d}\sigma }{\mathrm{d}\mu }\right) ^{2}+6\sigma \frac{%
\mathrm{d}^{2}\sigma }{\mathrm{d}\mu ^{2}}.
\end{eqnarray*}

Plugging the above into (\ref{eq:condstaerk}) and rearranging the terms
gives the following differential equation for $\sigma $%
\begin{equation}
\left( \frac{\mathrm{d}\sigma }{\mathrm{d}\mu }\right) ^{2}+3\sigma \frac{%
\mathrm{d}^{2}\sigma }{\mathrm{d}\mu ^{2}}=\mathrm{const}(\mu ).
\label{eq:foerstedif}
\end{equation}%
This is a necessary condition for exchangability.
\end{proof}

\begin{proof}[Theorem~\protect\ref{sufficiency}]

We also need to take a closer look at higher-order terms in the Taylor
expansion of the integral (\ref{eq:geodesicform}) and obtain a stronger
necessary condition for exchangeability. As in the proof Theorem \ref%
{disjunction of conditions}, we expand the integral over $A=\left[ \beta
_{0}-c,\beta _{0}+c\right] $: 
\begin{align*}
f\left( \beta _{0},n\right) & =\int_{A}\mathrm{e}^{{\text{-}}n\frac{1}{2}%
\left( \beta _{0}-\beta \right) ^{2}}\left( \prod_{k=3}^{6}\mathrm{e}^{{%
\text{-}}n\frac{D_{k}}{k!}\left( \beta _{0}-\beta \right) ^{k}}\right) 
\mathrm{e}^{{\text{-}}nO\left( \left( \beta _{0}-\beta \right) ^{7}\right) }~%
\mathrm{d}\beta  \\
& =\int_{A}\mathrm{e}^{{\text{-}}n\frac{1}{2}\left( \beta _{0}-\beta \right)
^{2}}\left( \prod_{k=3}^{7}\left( 1+X_{k}\right) \right) ~\mathrm{d}\beta ~,
\end{align*}%
where 
\begin{align*}
X_{3}& ={\text{-}}n\frac{D_{3}}{3!}\left( \beta _{0}-\beta \right) ^{3}+%
\frac{1}{2}n^{2}\left( \frac{D_{3}}{3!}\right) ^{2}\left( \beta _{0}-\beta
\right) ^{6}+\frac{1}{3!}n^{3}\left( \frac{D_{3}}{3!}\right) ^{3}\left(
\beta _{0}-\beta \right) ^{9} \\
& +\frac{1}{4!}n^{4}\left( \frac{D_{3}}{3!}\right) ^{4}\left( \beta
_{0}-\beta \right) ^{12}+O\left( n^{5}(\beta _{0}-\beta )^{15}\right) ~, \\
X_{4}& ={\text{-}}n\frac{D_{4}}{4!}\left( \beta _{0}-\beta \right) ^{4}+%
\frac{1}{2}n^{2}\left( \frac{D_{4}}{4!}\right) ^{2}\left( \beta _{0}-\beta
\right) ^{8}+O\left( n^{3}\left( \beta _{0}-\beta \right) ^{12}\right) \,, \\
X_{5}& ={\text{-}}n\frac{D_{5}}{5!}\left( \beta _{0}-\beta \right)
^{5}+O\left( n^{2}\left( \beta _{0}-\beta \right) ^{10}\right) ~, \\
X_{6}& ={\text{-}}n\frac{D_{6}}{6!}\left( \beta _{0}-\beta \right)
^{6}+O\left( n^{2}\left( \beta _{0}-\beta \right) ^{12}\right) ~, \\
X_{7}& ={\text{-}}O\left( n\left( \beta _{0}-\beta \right) ^{7}\right) ~.
\end{align*}%
We assume that condition (\ref{eq:condstaerk}) is satisfied, so that $%
O\left( n^{\textrm{-}\nicefrac{3}{2}}\right) $ term in the expansion (cf. Equation %
\ref{s}) is constant in $\beta _{0}$. Since if we integrate over the full
real line rather than $A$ then the error we make is of order $O\left( 
\mathrm{e}^{\textrm{-}cn}\right) $, and $\left( \beta _{0}-\beta \right) ^{m}$ under
Gaussian integral over the full real line results in $O\left( n^{{\text{-}}%
(m+1)/2}\right) $ if $m$ is even, and $0$ if $m$ is odd, there will be no
terms of order $O\left( n^{{\text{-}}2}\right) $. Therefore, we need to look
for terms of order $O\left( n^{{\text{-}}\nicefrac{5}{2}}\right) $. There
are five of them and their sum must be independent of $\beta _{0}$ (using
similar argument as for the $O\left( n^{{\text{-}}\nicefrac{3}{2}}\right) $
term in the proof of Theorem \ref{disjunction of conditions}): 
\begin{align*}
& \frac{1}{4!}n^{4}\left( \frac{D_{3}}{3!}\right) ^{4}\left( \beta
_{0}-\beta \right) ^{12}~+~\frac{1}{2}n^{2}\left( \frac{D_{4}}{4!}\right)
^{2}\left( \beta _{0}-\beta \right) ^{8}~-~n\frac{D_{6}}{6!}\left( \beta
_{0}-\beta \right) ^{6} \\
& ~-~\frac{1}{2}n^{3}\left( \frac{D_{3}}{3!}\right) ^{2}\frac{D_{4}}{4!}%
\left( \beta _{0}-\beta \right) ^{10}~+~n^{2}\frac{D_{3}}{3!}\frac{D_{5}}{5!}%
\left( \beta _{0}-\beta \right) ^{8}=\mathrm{const}\left( \beta _{0}\right)
~.
\end{align*}%
All the terms appear in the Gaussian integral. Given the fact that for even $%
m$, 
\begin{equation*}
\int \mathrm{e}^{{\text{-}}n\frac{1}{2}(\beta _{0}-\beta )^{2}}\left( \beta
_{0}-\beta \right) ^{m}~\mathrm{d}\beta =(m-1)!!\tau ^{\nicefrac{1}{2}}n^{{%
\text{-}}\frac{m+1}{2}},
\end{equation*}%
we can rewrite the condition on $O\left( n^{\textrm{-}\nicefrac{5}{2}}\right) $ term
as: 
\begin{equation*}
\frac{11!!}{4!(3!)^{4}}D_{3}^{4}+\frac{7!!}{2(4!)^{2}}D_{4}^{2}-\frac{5!!}{6!%
}D_{6}-\frac{9!!}{2(3!)^{2}4!}D_{3}^{2}D_{4}+\frac{7!!}{3!5!}D_{3}D_{5}=%
\mathrm{const}\left( \beta _{0}\right) ,
\end{equation*}%
where we also skipped the $n^{\textrm{-}\nicefrac{5}{2}}$ terms and used the fact
that $D_{2}=1$ in the geodesic parameterization. Evaluating the factorials
and multiplying by a constant gives: 
\begin{equation}
385D_{3}^{4}+105D_{4}^{2}-24D_{6}-630D_{3}^{2}D_{4}+168D_{3}D_{5}=\mathrm{%
const}(\beta _{0}).  \label{eq:5_2}
\end{equation}%
In order to evaluate $D_{3},D_{4},D_{5},$ and $D_{6}$ we calculate the
derivatives of $\sigma $ under the condition that the differential equation (%
\ref{eq:foerstedif}) is satisfied for some constant $c$%
\begin{eqnarray*}
2\left( \frac{\mathrm{d}\sigma }{\mathrm{d}\mu }\right) ^{2}+6\sigma \frac{%
\mathrm{d}^{2}\sigma }{\mathrm{d}\mu ^{2}} &=&3c, \\
\frac{\mathrm{d}^{2}\sigma }{\mathrm{d}\mu ^{2}} &=&\frac{1}{2}\sigma ^{{%
\text{-}}1}c-\frac{1}{3}\sigma ^{{\text{-}}1}\left( \frac{\mathrm{d}\sigma }{%
\mathrm{d}\mu }\right) ^{2}, \\
\frac{\mathrm{d}^{3}\sigma }{\mathrm{d}\mu ^{3}} &=&{\text{-}}\frac{5}{6}%
\sigma ^{{\text{-}}2}\frac{\mathrm{d}\sigma }{\mathrm{d}\mu }c+\frac{5}{9}%
\sigma ^{{\text{-}}2}\left( \frac{\mathrm{d}\sigma }{\mathrm{d}\mu }\right)
^{3}, \\
\frac{\mathrm{d}^{4}\sigma }{\mathrm{d}\mu ^{4}} &=&\frac{25}{9}\sigma ^{{%
\text{-}}3}c\left( \frac{\mathrm{d}\sigma }{\mathrm{d}\mu }\right) ^{2}-%
\frac{5}{3}\sigma ^{{\text{-}}3}\left( \frac{\mathrm{d}\sigma }{\mathrm{d}%
\mu }\right) ^{4}-\frac{5}{12}\sigma ^{{\text{-}}3}c^{2}.
\end{eqnarray*}%
We plug this into (\ref{eq:5_2}), and get:%
\begin{align*}
D_{2}& =1, \\
D_{3}& ={\text{-}}\frac{\mathrm{d}\sigma }{\mathrm{d}\mu }, \\
D_{4}& =\frac{5}{3}\left( \frac{\mathrm{d}\sigma }{\mathrm{d}\mu }\right)
^{2}-c, \\
D_{5}& ={\text{-}}\frac{10}{3}\left( \frac{\mathrm{d}\sigma }{\mathrm{d}\mu }%
\right) ^{3}+\frac{7c}{2}\frac{\mathrm{d}\sigma }{\mathrm{d}\mu }, \\
D_{6}& =\frac{2\cdot 35}{9}\left( \frac{\mathrm{d}\sigma }{\mathrm{d}\mu }%
\right) ^{4}-\frac{215c}{2\cdot 9}\left( \frac{\mathrm{d}\sigma }{\mathrm{d}%
\mu }\right) ^{2}+\frac{8}{3}c^{2}.
\end{align*}

We plug this into (\ref{eq:5_2}), and get:%
\begin{eqnarray*}
&&385D_{3}^{4}+105D_{4}^{2}-24D_{6}-630D_{3}^{2}D_{4}+168D_{3}D_{5} \\
&=&385\left( -\frac{\mathrm{d}\sigma }{\mathrm{d}\mu }\right) ^{4}+105\left( 
\frac{5}{3}\left( \frac{\mathrm{d}\sigma }{\mathrm{d}\mu }\right)
^{2}-c\right) ^{2}-24\left( \frac{2\cdot 35}{9}\left( \frac{\mathrm{d}\sigma 
}{\mathrm{d}\mu }\right) ^{4}-\frac{215c}{2\cdot 9}\left( \frac{\mathrm{d}%
\sigma }{\mathrm{d}\mu }\right) ^{2}+\frac{8}{3}c^{2}\right) \\
&&-630\left( {\text{-}}\frac{\mathrm{d}\sigma }{\mathrm{d}\mu }\right)
^{2}\left( \frac{5}{3}\left( \frac{\mathrm{d}\sigma }{\mathrm{d}\mu }\right)
^{2}-c\right) +168\left( {\text{-}}\frac{\mathrm{d}\sigma }{\mathrm{d}\mu }%
\right) \left( {\text{-}}\frac{10}{3}\left( \frac{\mathrm{d}\sigma }{\mathrm{%
d}\mu }\right) ^{3}+\frac{7c}{2}\frac{\mathrm{d}\sigma }{\mathrm{d}\mu }%
\right) \\
&=&385\left( \frac{\mathrm{d}\sigma }{\mathrm{d}\mu }\right) ^{4}+105\cdot
\left( \frac{5}{3}\right) ^{2}\left( \frac{\mathrm{d}\sigma }{\mathrm{d}\mu }%
\right) ^{4}+105\cdot c^{2}-35\cdot 2\cdot 5\left( \frac{\mathrm{d}\sigma }{%
\mathrm{d}\mu }\right) ^{2}c-\frac{8\cdot 2\cdot 35}{3}\left( \frac{\mathrm{d%
}\sigma }{\mathrm{d}\mu }\right) ^{4} \\
&&+\frac{4\cdot 215c}{3}\left( \frac{\mathrm{d}\sigma }{\mathrm{d}\mu }%
\right) ^{2}-8^{2}c^{2} \\
&&-210\cdot 5\left( \frac{\mathrm{d}\sigma }{\mathrm{d}\mu }\right)
^{4}+630\left( \frac{\mathrm{d}\sigma }{\mathrm{d}\mu }\right) ^{2}c+56\cdot
10\left( \frac{\mathrm{d}\sigma }{\mathrm{d}\mu }\right) ^{4}-84\cdot
7c\left( \frac{\mathrm{d}\sigma }{\mathrm{d}\mu }\right) ^{2}.
\end{eqnarray*}

Collecting the terms gives%
\begin{eqnarray*}
&&\left( 385+105\cdot \left( \frac{5}{3}\right) ^{2}-\frac{8\cdot 2\cdot 35}{%
3}-210\cdot 5+56\cdot 10\right) \left( \frac{\mathrm{d}\sigma }{\mathrm{d}%
\mu }\right) ^{4} \\
&&-\left( 35\cdot 2\cdot 5-\frac{4\cdot 215}{3}-630+84\cdot 7\right) c\left( 
\frac{\mathrm{d}\sigma }{\mathrm{d}\mu }\right) ^{2}+\left( 105-8^{2}\right)
c^{2} \\
&=&{\text{-}}\frac{64}{3}c\left( \frac{\mathrm{d}\sigma }{\mathrm{d}\mu }%
\right) ^{2}+41c^{2}
\end{eqnarray*}%
Interestingly all term with $\left( \mathrm{d}\sigma /\mathrm{d}\mu \right)
^{4}$ have disappeared and we get:%
\begin{equation}
{\text{-}}\frac{64}{3}c\left( \frac{\mathrm{d}\sigma }{\mathrm{d}\mu }%
\right) ^{2}+41c^{2}=\mathrm{const}(\mu ).  \label{eq:final_equation}
\end{equation}%
\textbf{Assume} that $c\neq 0$. Equation (\ref{eq:final_equation}) is
satisfied only when 
\begin{equation*}
\frac{\mathrm{d}\sigma }{\mathrm{d}\mu }=\mathrm{const}(\mu ),
\end{equation*}%
which has a general solution of the form:%
\begin{equation*}
\sigma \left( \mu \right) =k\mu +\ell
\end{equation*}%
for some constants $c_{1}$ and $k$ and we get%
\begin{equation*}
V\left( \mu \right) =\left( k\mu +\ell \right) ^{2}.
\end{equation*}

\textbf{Assume }$c=0.$ We now solve (\ref{eq:foerstedif}). The differential
equation can be rewritten as%
\begin{equation*}
\frac{\mathrm{d}^{2}}{\mathrm{d}\mu ^{2}}\big({\sigma }^{\nicefrac{4}{3}}\big)=0.
\end{equation*}%
Hence there exists constants $k$ and $\ell $ such that%
\begin{equation*}
\sigma ^{\nicefrac{4}{3}}=k\mu +\ell
\end{equation*}%
or equivalently%
\begin{equation*}
V\left( \mu \right) =\left( k\mu +\ell \right) ^{\nicefrac{3}{2}}.
\end{equation*}
\end{proof}

\end{document}